\documentclass[twoside]{article}

\usepackage{microtype}
\usepackage{graphicx}
\usepackage{subfigure}
\usepackage{booktabs} 


\newcommand\numberthis{\addtocounter{equation}{1}\tag{\theequation}}

\def\code#1{\texttt{#1}}

\usepackage[accepted]{aistats2021}
\usepackage{dblfloatfix}

\usepackage{tikz}
\usetikzlibrary{automata,decorations.markings,positioning,arrows}
\usepackage{wrapfig, amsmath, amsthm, mathtools}

\usepackage[utf8]{inputenc} 
\usepackage[T1]{fontenc}    
\usepackage{xcolor}
\usepackage[colorlinks,citecolor=blue,urlcolor=blue,linkcolor=blue,linktocpage=true]{hyperref}

\usepackage{url, float}            
\usepackage{amsfonts}       
\usepackage{amssymb}
\usepackage{nicefrac}
\usepackage[round]{natbib}

\usepackage{xcolor}
\newcommand{\red}{\color{black}}
\usepackage{thm-restate}

\newcommand{\indep}{\rotatebox[origin=c]{90}{$\models$}}
\newcommand{\ATE}{\mathrm{ATE}}
\newcommand{\Do}{\mathrm{do}}
\usepackage[colorlinks]{hyperref}    
\newtheorem{theorem}{Theorem}

\newtheorem{lemma}{Lemma}


\begin{document}

\twocolumn[
\aistatstitle{Causal Inference with Selectively Deconfounded Data}
\aistatsauthor{Kyra Gan \And Andrew A. Li \And Zachary C. Lipton \And Sridhar Tayur}
\aistatsaddress{Carnegie Mellon University, Pittsburgh, PA 15213\\
\texttt{\{\href{mailto:kyragan@cmu.edu}{kyragan},\href{mailto:aali1@cmu.edu}{aali1},\href{mailto:zlipton@cmu.edu}{zlipton},\href{mailto:stayur@cmu.edu}{stayur}\}@cmu.edu}}
]



\begin{abstract}
Given only data generated by a standard confounding graph with unobserved confounder, 
the Average Treatment Effect (ATE) is not identifiable.
To estimate the ATE, a practitioner must then either 
(a) collect deconfounded data;
(b) run a clinical trial; or 
(c) elucidate further properties of the causal graph 
that might render the ATE identifiable.
In this paper, we consider the benefit of incorporating 
a large \emph{confounded} observational dataset (\emph{confounder unobserved})
alongside a small \emph{deconfounded} observational dataset (\emph{confounder revealed})
when estimating the ATE.
Our theoretical results {\red suggest} that the inclusion of confounded data 
can significantly reduce the quantity of deconfounded data required
to estimate the ATE to within a desired accuracy level.
Moreover, in some cases---say, genetics---we could imagine 
retrospectively selecting samples to deconfound.
We demonstrate that by 
actively selecting these samples 
based upon the (already observed) treatment and outcome,
we can reduce sample complexity further.
Our theoretical and empirical results establish
that the worst-case relative performance of our approach 
(vs. a natural benchmark) is bounded
while our best-case gains are unbounded. 
Finally, we demonstrate the benefits of selective deconfounding
using a large real-world dataset related to genetic mutation in cancer.


\end{abstract}

\section{Introduction}
\label{sec:intro}



The fundamental problem in causal inference is to estimate causal effects using \emph{observational} data. 
This task is particularly motivated by scenarios
when experiments are infeasible.
While the 
literature
typically addresses a rigid setting in which confounders
are either \emph{always} or \emph{never} observed,
in many applications we might 
observe confounders for a {\em subset} of samples. 
For example, in healthcare, 
a particular gene might be suspected 
to confound the relation between a behavior 
and a health outcome of interest.
Due to the high cost of genetic tests,
we might only be able to afford 
to reveal the value of the genetic confounder
for a subset of patients.
Note that for a variable such as a genetic mutation,
we might observe retrospectively, even after the treatment
and outcome have been observed. 
We call this process of revealing the value 
of an (initially unobserved) confounder 
\emph{deconfounding}, and the samples where treatment, outcome, and confounders are all observed \emph{deconfounded data}.



So motivated, this paper addresses
the middle ground 
along the confounded-deconfounded spectrum.
Naively, one could estimate the 
ATE
with standard methods using only the deconfounded data. 
%
%
%
First, we ask: 
{\em how much can we improve our ATE estimates
by incorporating confounded data over approaches
that rely on deconfounded data alone?}
Second, motivated by the setting in which
our confounders are genetic traits
that might be retrospectively observed
for cases with known treatments and outcomes,
we introduce the problem of 
\emph{selective deconfounding}---allocating 
a fixed budget for revealing the confounder
based upon observed treatments and outcomes.
This prompts our second question: 
{\em what is the optimal policy for selecting data to deconfound?}
To our knowledge, this is the first paper that focuses
on the case where ample (cheaply-acquired) confounded data 
is available and we can select only few confounded samples to deconfound (expensive).

%
%
%


We address these questions for a standard confounding graph
where the treatment and outcome are binary, and the confounder is categorical.
First, we propose a simple method for incorporating confounded data
that achieves a constant-factor improvement 
in 
ATE estimation error.
In short, the inclusion of (infinite) confounded data
reduces the number of free parameters to be estimated, 
improving our estimates of the remaining parameters.
Moreover, due to the multiplicative factors in the causal functional,
errors in parameter estimates can compound.
Thus, our improvements in parameter estimates
yield greater benefits in estimating treatment effects. 
For binary confounders, our numerical results show that on average, 
over problem instances selected uniformly on the parameter simplex,
our method achieves roughly $2.5\times$ improvements in ATE estimation error.

Next, we show that we can 
reduce 
error further
by actively choosing which samples to deconfound. 
Our proposed policy for selecting samples dominates reasonable benchmarks.
In the worst case, our method requires 
no more than $2\times$ as many samples 
as a natural sampling policy 
and our best-case gains are unbounded. 
Moreover, our qualitative analysis characterizes those situations 
most favorable/unfavorable for our method.
We extend our work to the scenario where 
only a finite amount of confounded samples is present,
demonstrating
our qualitative insights continue to apply {\red (Appendix~\ref{app:finite})}. 
Additionally, we 
validate our methods using COSMIC~\citep{tate2019cosmic, cosmic2019}, 
a real-world dataset containing cancer types,
genetic mutations, and other patient features, 
showing  the practical benefits 
of our proposed sampling policy.
Throughout the paper, we implicitly assume that the \emph{confounded} data was sampled i.i.d. from the target population of interest
(but our policy for selecting data to deconfound need not be).


\begin{figure}[!t]
\centering
\includegraphics[width=0.35\linewidth]{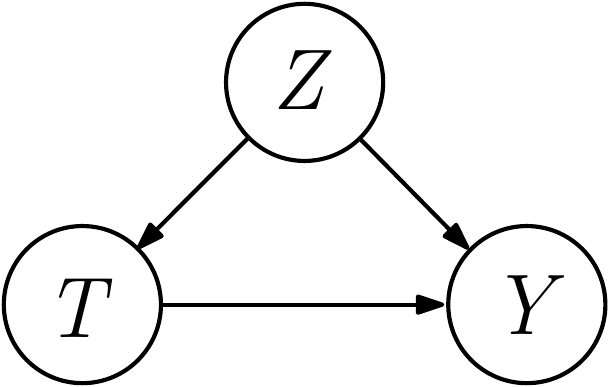}

\caption{Causal graph with treatment $T$, outcome $Y$, and selectively observed confounder $Z$}
\label{fig:causal_diagram}
\end{figure}

\section{Related Work}
\label{sec:lit}
Causal inference has been studied thoroughly under the ignorability assumption, i.e.,
no unobserved confounding
\citep{neyman1923applications, rubin1974estimating, holland1986statistics}.
Some approaches for estimating the ATE under ignorability
include inverse propensity score weighting
\citep{rosenbaum1983central, hirano2003efficient, mccaffrey2004propensity},
matching \citep{dehejia2002propensity}, the backdoor adjustment \citep{pearl1995causal}, and targeted learning~\citep{van2011targeted}.
Some related papers look to combine various sources of information,
for instance from randomized control trials and observational data 
to estimate the ATE \citep{stuart2011use, hartman2015sample}.
Other papers leverage machine learning techniques, 
such as random forests, 
for estimating causal effects~\citep{alaa2017bayesian, wager2018estimation}.


Some papers investigate ATE estimation 
with confounded data by leveraging 
mediators~\citep{pearl1995causal}
and proxies~\citep{miao2018identifying}.
Others investigate
combining confounded observational data 
with \emph{experimental} 
data. 
%
%
\cite{kuroki2014measurement} identify graphical structures under which causal effect can be identified. 
\cite{miao2018identifying} 
propose to use two different types of proxies
to recover causal effects with one unobserved confounder. 
\cite{shi2018multiply} extend the work by 
\cite{miao2018identifying} to multiple confounders.
However, both methods require knowledge of proxy categories a priori 
and are not robust under misspecification of proxy categories. 
\cite{louizos2017causal} use variational autoencoders 
to recover the causal effect under the model where 
when conditioned on the unobserved confounders, 
the proxies are independent of treatment and outcome. 
%
%
\cite{pearl1995causal} introduces the front-door adjustment,
expressing the causal effect as a functional
that concerns only the (possibly confounded) treatment and outcome, 
and an (unconfounded) mediator that transmits the entire effect.


In other work,
\cite{bareinboim2013general} propose to combine observational and experimental data
under distribution shift, 
learning the treatment effect from the experimental data and transporting it to the confounded observational data 
to obtain a bias-free estimator for the causal effect. 
%
%
%
Recently, \cite{kallus2018removing} 
propose a two-step process to remove hidden confounding 
by incorporating experimental data. 
%
%
Lastly, few papers provide finite sample guarantees for causal inference.
\cite{shalit2017estimating} upper bound the estimation error 
for a family of algorithms that estimate causal effects 
under the ignorability assumption.

Unlike most prior work, we 
(i) address confounded and deconfounded (but not experimental) observational data, 
(ii) perform finite sample analysis to quantify
the relative benefit of additional confounded and deconfounded data  
towards improving our estimate of the average treatment effect, and (iii) investigate sample-efficient policies for selective deconfounding.
\section{Methods and Theory}
\label{sec:methods}

Let $T$ and $Y$ be random  variables 
denoting the 
treatment and outcome. 
We restrict these to be binary,
viewing $T$ as an indicator of whether a particular treatment has occurred
and $Y$ as an indicator of whether the outcome was \emph{successful}.
In this work, we assume the existence of a single (possible) confounder,
denoted $Z$, which can take up to $k$ categorical values (Figure \ref{fig:causal_diagram}). 
In addition, although we only include 
one unobserved confounder in our model, 
because our variables are categorical,
(as shown in Section~\ref{sec:assumptions})
this subsumes scenarios with multiple categorical confounders.
Following Pearl's nomenclature~\citep{pearl2000causality}, let 
\[
P(Y=y|\Do(T=t)) := \sum_{z\in[k]} P_{Y|T,Z}(y|t,z)P_Z(z).
\]
Our goal is to estimate the ATE, 
which can be expressed via the back-door adjustment
in terms of the joint distribution $P_{Y,T,Z}$ on $(Y,T,Z)$, 
as:
\begin{align*}
&\ATE:= P(Y=1|\Do(T=1)) - P(Y=1|\Do(T=0)), 
\\& 
= \sum_{z\in[k]} \left(P_{Y|T,Z}(1|1,z)- P_{Y|T,Z}(1|0,z)\right)P_Z(z).
\numberthis \label{equ:def}
\end{align*}
Our key contribution is to 
analyze and empirically validate
methods for estimating the ATE from 
both {\em confounded} and {\em deconfounded} observations. 
In our setup,
the \emph{confounded data} contains $n$ i.i.d. samples
from the joint distribution $P_{Y,T}$ (marginalized over the hidden confounder $Z$), 
and the \emph{deconfounded data} contains $m$ i.i.d. samples
from the full joint distribution $P_{Y,T,Z}$. 
Thus, the confounded and deconfounded data 
are $(y,t)$ and $(y,t,z)$ tuples, respectively. 
Recall that here \emph{deconfounding} means
selecting a confounded data point $(y,t)$ and 
revealing the value of its confounder $z$. 
There are two ways that we can obtain $m$ \emph{deconfounded data}, one through collecting $m$ deconfounded data 
directly without using the confounded data, and the other through revealing the value of the confounder for $m$ confounded data points.
%
%
Note that given this graph, 
we cannot exactly calculate the ATE 
unless we intervene or make further assumptions on the structure of the causal graph. 
%
Recall that such interventions or graph structures 
may not be available (e.g., in the case of genetic mutation).
Furthermore,
%
when deconfounded data is 
scarce
and confounded data is comparatively plentiful,
we 
hope to improve
our ATE estimates.
\subsection{Generalizability of Our Model}
\label{sec:assumptions}
First, we note that the use of categorical (even binary) data 
is well-established in both theory \citep{bareinboim2013general} 
and application \citep{knudson2001two,rayner2016panoply},
and not merely a simplifying proxy for continuous data.
%
Next, we show that our model subsumes scenarios 
with multiple categorical confounders.
First, absent additional distributional assumptions, 
our model captures multiple unobserved confounders 
by simple concatenation 
(since we impose no limit on the number of classes)
{\em without loss}. 
Now, one could make additional assumptions
(indeed, a {\em high}-dimensional setting might necessitate such assumptions) 
that could render alternative algorithms applicable. 
However, there exist many applications where 
(a) the confounder is of moderate dimension;
and (b) a practitioner would be dubious 
of any additional assumption \citep{bates2020causal}. 
Second, although in this scenario we implicitly assume
that the set of confounders is either 
never observed or entirely observed, 
this is also without loss so long as
the costs of revealing each confounders 
are the same (e.g. the genetic example). 
Intuitively, because we do not impose 
any independence assumption on the set of confounder,  
revealing all confounders offers maximal information 
on the joint distribution of the confounders.
We formalize this statement in Appendix~\ref{app_multiConfounder}.
%
%
While for simplicity we focus only on the setting in which
our confounder can be retroactively observed,
as we show in Appendix~\ref{app:pretreatment} our 
model can be 
applied 
straightforwardly 
to handle
a set of additional pretreatment covariates.
%
\subsection{Infinite Confounded Data}\label{sec:infinite}
In this subsection, 
we address the setting
where we have an {\em infinite} amount of confounded data ($n = \infty$), 
i.e., the marginal distribution $P_{Y,T}$ is known exactly. We leave the analysis on \emph{finite}  confounded data to Appendix~\ref{app:finite}.

\paragraph{Deconfounded Data Alone}
We begin with the baseline approach of using only the deconfounded data.
Let $p_{yt}^z = P_{Y,T,Z}(y,t,z)$, and let $\hat{p}_{yt}^z$ be 
empirical estimate of $p_{yt}^z$ from the deconfounded data 
using the Maximum Likelihood Estimator (MLE).
Let $\widehat{\ATE}$ be the estimated average treatment effect 
calculated by plugging $\hat{p}_{yt}^z$'s into Equation (\ref{equ:def}).
In the following theorem, 
we show a quantity of deconfounded samples $m$ which is sufficient 
to estimate the ATE to within a desired level of accuracy 
under the estimation process described above. 
Let $C = {12.5k^2\ln({8k}/{\delta})}{\epsilon^{-2}}$ throughout.

\begin{restatable}{theorem}{thmbase} {\red (Upper Bound)}
\label{thm_m0}
Using deconfounded data alone, 
$P\left(\left|\widehat{\ATE}-\ATE\right|\geq \epsilon\right)<\delta$ 
is satisfied if the deconfounded sample size $m$ is at least 
\begin{align*}
m_{\mathrm{base}} &:=
\max_{t,z} {C}{\left(\sum_y p_{yt}^z\right)^{-2}} 
=  \max_{t,z}
\frac{1}{P_{T,Z}(t,z)^2}C.
\end{align*}
\end{restatable}
The proof of Theorem~\ref{thm_m0} (Appendix~\ref{proof:thm_m0})
relies on
an additive decomposition of 
ATE estimation error
in terms of the estimation errors on the $p_{yt}^z$'s, 
along with concentration via Hoeffding's inequality.
We will contrast
Theorem \ref{thm_m0}
with counterpart 
methods
that use confounded data.

\paragraph{Incorporating Confounded Data} 
Estimating the ATE requires estimating the entire distribution $P_{Y,T,Z}$. 
To assess the utility of confounded data,
we decompose $P_{Y,T,Z}$ into two components:
(i) the confounded distribution $P_{Y,T}$; and
(ii) the conditional distributions
$P_{Z|Y,T}$.
Given infinite confounded data, 
the confounded distribution $P_{Y,T}$ is known exactly,
reducing the number of free parameters in $P_{Y, T, Z}$ by three. 
The deconfounded data can then be used exclusively 
to estimate the conditional distributions $P_{Z|Y,T}$.
To ease notation,
let $a_{yt} = P_{Y,T}(y,t)$, and let
$q_{yt}^z = P_{Z=z|Y,T}(y,t)$.
Moreover, let {\bf a}$:= (a_{00}, a_{01}, a_{10}, a_{11})$,  and let {\bf q} denote the vector that contains $q_{yt}^z$ for all values of $Y, T$ 
and $Z$.

\begin{table*}[ht!]
    \centering
    \begin{tabular}{@{} lrr @{}}
    \toprule      & {\bf $\bf w$} &  {\bf $\bf M$} \\
    \midrule
    {\bf NSP} 
    & $\beta^{-2}C_1\max_{t}\left(\frac{ a_{1t}(\sum_y a_{y\bar t})^2}{(\sum_y a_{yt})^2}, \frac{ a_{0t}(\sum_y a_{y\bar t})^2}{(\sum_y a_{yt})^2}\right)$
    & $\beta^{-2}C \max_{t} \frac{1}{\sum_y a_{yt}} $
    \\
    {\bf USP} & $4\beta^{-2}C_1\max_{t}\left(\frac{ a_{1t}^2(\sum_y a_{y\bar t})^2}{(\sum_y a_{yt})^2},\frac{ a_{0t}^2(\sum_y a_{y\bar t})^2}{(\sum_y a_{yt})^2}\right)$
    & $4\beta^{-2}C\max_{t} \frac{\sum_{y}a_{yt}^2}{(\sum_y a_{yt})^{2}}$ 
    \\
    {\bf OWSP}  &
    $2\beta^{-2}C_1\max_{t}\left(\frac{ a_{1t}(\sum_y a_{y\bar t})^2}{\sum_y a_{yt}}, \frac{ a_{0t}(\sum_y a_{y\bar t})^2}{\sum_y a_{yt}}\right)$
    & $ {2}{\beta^{-2}}C$
    \\\bottomrule
    \end{tabular}
    \caption{Comparison between  the instance-specific lower bound ($w$) and the worst-case upper bound ($M$).}
    \label{tab:my_label}
\end{table*}

\paragraph{Hardness of The Problem}
We first show that for particular choices of the conditional distributions, 
this estimation problem can be arbitrarily hard 
for any confounded distribution $\bf a$.
In particular, we show that for every fixed confounded distribution
encoded by $\bf a$ and for any finite amount of deconfounded data $m$,
there exist two conditional distributions encoded by ${\bf q}$'s 
such that we cannot distinguish these two distributions with high probability 
while their corresponding ATE values are constant away from each other.
Let $\ATE_{\bf a}({\bf q})$ denote the value of the ATE 
when evaluated under the distributions $\bf a$ and ${\bf q}$. 
Then, we have
\begin{restatable}{proposition}{propHardness}
\label{prop:hardness}
For every $\bf a$, there exists some $\epsilon,\delta$ 
such that for any fixed number of deconfounded samples $m$, 
we can always construct a pair of $\bf q$'s, say $\bf q_1$ and $
\bf q_2$, such that no algorithm can distinguish 
these two conditional distributions
with probability more than $1-\delta$, and
their corresponding ATE values are $\epsilon$ away:
$\left|\ATE_{\bf a}({\bf q_1}) - \ATE_{\bf a}({\bf q_2})\right|\geq\epsilon$.
\end{restatable}
Here, $\epsilon$ is a function of the confounded distribution $\bf a$,
and the values of $\bf q_1$ and $\bf q_2$ 
depend on the variable $\delta$ 
and the number of deconfounded samples, $m$. 
The proof of Proposition~\ref{prop:hardness} 
(Appendix \ref{proof:propHardness}) 
relies on constructing a pair of $\bf q_1$ and $
\bf q_2$ such that the value of 
$\left|\ATE_{\bf a}({\bf q_1}) - \ATE_{\bf a}({\bf q_2})\right|$ is constant
for all confounded distribution $\bf a$
where the entries of $\bf a$ are strictly positive.
In particular, this happens when the entries 
of the conditional distribution $\bf q$
approach to $0$.

Unless otherwise mentioned, in the rest of the paper, 
we assume that each entry of the conditional distribution, $q_{yt}^z$,
is bounded within the interval $[\beta, 1-\beta]$, for some small positive constant $\beta$.
We first provide a lower bound on the sample complexity needed 
for any algorithm and any confounded distribution:
\begin{restatable}{theorem}{thmLower}(Lower Bound)
\label{thm:general_lower}
For any estimator and sample selection policy, 
the number of deconfounded samples $m$ needed to achieve
$P\left(\left|\widehat{\ATE}-\ATE\right|\geq\epsilon\right)<\delta$ 
is at least $\Omega(\epsilon^{-2}\log(\delta^{-1}))$. 
\end{restatable}
The proof of Theorem~\ref{thm:general_lower} 
(Appendix~\ref{proof:genera_lower}) proceeds by construction.
When comparing this lower bound with the upper bounds that we will present later, 
we observe that our sample complexities are tight up to a constant.

In the rest of the section, we  first derive an upper bound 
of the sample complexity of a natural policy 
that is analogous to passive sampling (Theorem~\ref{thm_m1}).
We then derive the worst-case upper bound 
over all possible conditional distributions, $P_{Z|Y,T}$,  in Corollary \ref{cor:worstNSP}.
Next, we propose two additional sampling policies,
one of which enjoys an instance independent guarantee 
over the worst-case conditional distribution in $P_{Z|Y,T}$. 
We compare these sampling policies 
by investigating their sample complexity upper bounds (Theorem~\ref{thm_m2}), 
worst-case upper bounds (Corollary \ref{cor:worstUSPOWSP}), 
and lower bounds (Theorem \ref{thm:lower_bound}). 
Table~\ref{tab:my_label} summarizes 
our worst-case upper bound and lower bound results.
In addition, we derive a worst-case sample complexity guarantee 
of our proposed sampling policy in Theorem \ref{cor}.

Let 
$\hat{q}_{yt}^z$
be the empirical estimate of $q_{yt}^z$ from the confounded data using the MLE (where $m$ confounded data were deconfounded randomly).
Then, we will always calculate 
$\widehat{\ATE}$ 
by plugging the
$a_{yt}$'s and $\hat{q}_{yt}^z$'s into Equation (\ref{equ:def}).
%
The following theorem {\red upper} bounds 
the sample complexity 
for this
estimator (later, we refer to this sampling policy as the \emph{natural} selection policy):
%
\begin{restatable}{theorem}{thmNSP}{\red (Upper Bound)}
\label{thm_m1}
When incorporating (infinite) confounded data, 
$P(|\widehat{\ATE}-\mathrm{ATE}|\geq \epsilon)<\delta$ is
satisfied if the number of deconfounded samples $m$ is at least
\begin{align}
    m_{\mathrm{nsp}} &:= 
     \max_{t,z} \frac{C \sum_y a_{yt}}{\left(\sum_y a_{yt}q_{yt}^z\right)^{2}} 
     = \max_{t,z}   \frac{ P_T(t)}{P_{T,Z}(t,z)^2}C.
     \label{eq:nsp}
\end{align}
\end{restatable}

%
The proof of Theorem~\ref{thm_m1} is included in Appendix~\ref{proof:thm_m2}. Notably, $m_{\mathrm{nsp}}$ is less than $m_{\mathrm{base}}$ 
for any problem instance,
highlighting the value of confounded data.
%
%
%

{\red In addition, when $q_{yt}^z\in[\beta, 1-\beta]$, 
the maximum of Equation (\ref{eq:nsp}) over $\bf q$ 
is obtained at $q_{yt}^z = \beta$ for some $t,z$. 
Let $M_{\mathrm{nsp}}$ be the worst-case $m_{\mathrm{nsp}}$ 
over all possible values of $\bf q$.
Since $\min \max_{t}1/(\sum_y a_{yt})$ is achieved 
when $\sum_y a_{yt} = 1/2$, $\max_{t}1/(\sum_y a_{yt}) \geq 2$. Thus, 
\begin{restatable}{cor}{wosrtNSP} (Worst-Case Upper Bound Guarantee)
\label{cor:worstNSP}
\begin{align*}
    M_{\mathrm{nsp}}&: = \max_{\bf q}
    m_{\mathrm{nsp}}
    =  
    \max_{t} \frac{C}{\beta^{2}\sum_y a_{yt}} \geq  \frac{2C}{\beta^{2}}.
\end{align*}
\end{restatable}
}
\paragraph{Sample Selection Policies}
One important consequence of our procedure for estimating the ATE 
is that the four conditional distributions are estimated separately: 
the deconfounded data is partitioned into four groups, 
one for each $(y,t) \in \{0,1\}^2$, 
and the empirical measures $\hat{q}_{yt}^z$'s are then calculated separately. 
This means that the procedure does {\em not} 
rely on the fact that the deconfounded data 
is drawn from the exact distribution $P_{Y,T,Z}$, 
and in particular, the draws might as well have been made directly 
from the conditional distributions ${P}_{Z|Y,T}$. 
Suppose now that we can draw directly from these conditional distributions. 
This situation may arise when the confounder 
is fixed (like a genetic trait) and can be observed
retrospectively.
We now ask, given a budget for selective deconfounding samples,  
how should we allocate our samples among the four groups ($(y,t) \in \{0,1\}^2$)?

Let $\mathbf{x} = (x_{00},x_{01},x_{10},x_{11})$ denote a selection policy
with $x_{yt}$ indicating the proportion of samples allocated to group $(y,t)$, and $\sum_{yt} x_{yt}=1$. 
We consider the following three non-adaptive selection policies:
\begin{enumerate}
    \item {\bf Natural (NSP):} $x_{yt} = a_{yt} = P_{Y,T}(y,t)$---this is similar to drawing from $P_{Y,T,Z}$, {\red and is analogous to passive sampling.}
    %
    %
    \item {\bf Uniform (USP):} $x_{yt} = 1/4$. Splits samples evenly across all four conditional distributions. 
    \item {\bf Outcome-weighted (OWSP):} $x_{yt} =\frac{a_{yt}}{2\sum_y a_{yt}}$ $= P_{Y|T}(y|t)/2$.
    Splits 
    samples evenly across treatment groups ($T=0$ vs. $1$), 
    and within each treatment group,
    choosing the number of samples to be proportional to the outcome ($Y=0$ vs. $1$).
\end{enumerate}
While the particular form of OWSP appears to be the least intuitive,
{\red later we show it was in fact the unique policy that provides an instance independent guarantee when considering the worst-case $\bf q$'s.

For some fixed $\epsilon$ and $\delta$,
let $\mu_{\mathrm{nsp}}$ be the minimum number of samples needed 
to achieve $P(|\widehat\ATE - \ATE|\geq \epsilon)<\delta$ 
under the \emph{natural} selection policy over all estimators.
We similarly define $\mu_{\mathrm{usp}}$ and $\mu_{\mathrm{owsp}}$.
Then Theorem~\ref{thm_m1} provides an upper bound on $\mu_{\mathrm{nsp}}$ 
by studying the upper bound of a specific estimator. 
Before we provide an upper bound of the sample complexity 
of $\mu_{\mathrm{usp}}$ and $\mu_{\mathrm{owsp}}$, 
we first establish that $\mu_{\mathrm{nsp}}$ 
may be significantly worse than $\mu_{\mathrm{owsp}}$,
but $\mu_\mathrm{owsp}$ is never \emph{much} worse.

\begin{restatable}{theorem}{corUpper}
\label{cor}
For any fixed $\epsilon \in [0,0.5-2\beta(1-\beta)]$ and any fixed $\delta<1$, 
there exist distributions
where
$\mu_{\mathrm{owsp}}/\mu_{\mathrm{nsp}}$
is 
arbitrarily close to zero. In addition, for any estimator and every distribution, $\mu_{\mathrm{owsp}}/
\mu_{\mathrm{nsp}}\leq 2$.
\end{restatable}
The proof of Theorem~\ref{cor}
(Appendix~\ref{proof:cor}) proceeds by construction.
Note that the upper bound of $\epsilon$ in Theorem~\ref{cor} is not necessary the maximum achievable $\epsilon$. Instead it provides a range where Theorem~\ref{cor} holds.}
Next, we provide the upper bounds of 
{\red $\mu_\mathrm{usp}$ and $\mu_\mathrm{owsp}$ by analyzing} our algorithm
(analogous to Theorems \ref{thm_m0} and \ref{thm_m1}):

\begin{restatable}{theorem}{thmUSPOWSP} {\red (Upper Bound)}
\label{thm_m2}
Under the uniform selection policy,
with (infinite) confounded data incorporated, 
$P(|\widehat{\mathrm{ATE}}-\mathrm{ATE}|\geq \epsilon)<\delta$ 
is satisfied if
{\red $\mu_\mathrm{usp}$}
is at least
\begin{align*}
&
m_{\mathrm{usp}} := 
\max_{t,z} \frac{C \sum_y 4a_{yt}^2}{\left(\sum_y a_{yt}q_{yt}^z\right)^{2}}
=\max_{t,z}   \frac{4\sum_y P_{Y,T}(y,t)^2}{P_{T,Z}(t,z)^2}C.
\end{align*}\vspace{-5pt}
Similarly, 
for the outcome-weighted selection policy: \begin{align*}
m_{\mathrm{owsp}} &:= 
\max_{t,z} \frac{2C\left(\sum_y a_{yt}\right)^2}{\left(\sum_y a_{yt}q_{yt}^z\right)^{2}}
= \max_{t,z}   \frac{2}{P_{Z|T}(z|t)^2}C.  
\end{align*}
\end{restatable}
The proofs of Theorems~\ref{thm_m1} and \ref{thm_m2} (Appendix~\ref{proof:thm_m2}),
which differ from 
that
of Theorem~\ref{thm_m0}, 
require a modification to Hoeffding's inequality (Appendix, Lemma~\ref{cor4}), 
which we derive to bound the sample complexity 
of the weighted sum of two independent random variables.
Theorem \ref{thm_m2}
points to
some \emph{additional} advantages of OWSP.
First, OWSP 
has the nice property that
the sufficient number of samples, $m_{\mathrm{owsp}}$, 
does not depend on $P_{Y,T}$. 
Second,
a comparison of the quantities
$m_{\mathrm{usp}}$ and $m_{\mathrm{owsp}}$
suggests that USP is strictly dominated by OWSP, 
since $4a_{0t}^2+4a_{1t}^2-2(a_{0t}+a_{1t})^2=2(a_{0t}-a_{1t})^2\geq 0$. 
We might hope for a similar result by comparing $m_{\mathrm{owsp}}$ 
with $m_{\mathrm{nsp}}$ from comparing Theorems \ref{thm_m1} and \ref{thm_m2},
but
neither strictly dominates the other.
Instead, {\red recall that Theorem~\ref{cor}} shows that $\mu_\mathrm{nsp}$
may be significantly worse than $\mu_\mathrm{owsp}$,
but $\mu_\mathrm{owsp}$ is never \emph{much} worse.
{\red
Similar to Corollary~\ref{cor:worstNSP}, we now derive 
an equivalent corollary for Theorem~\ref{thm_m2} 
where we consider the worst case over  $\bf q$'s. 
\begin{figure*}[ht!]
    \centering
    \begin{minipage}{.695\textwidth}
    \includegraphics[width=0.48\textwidth]{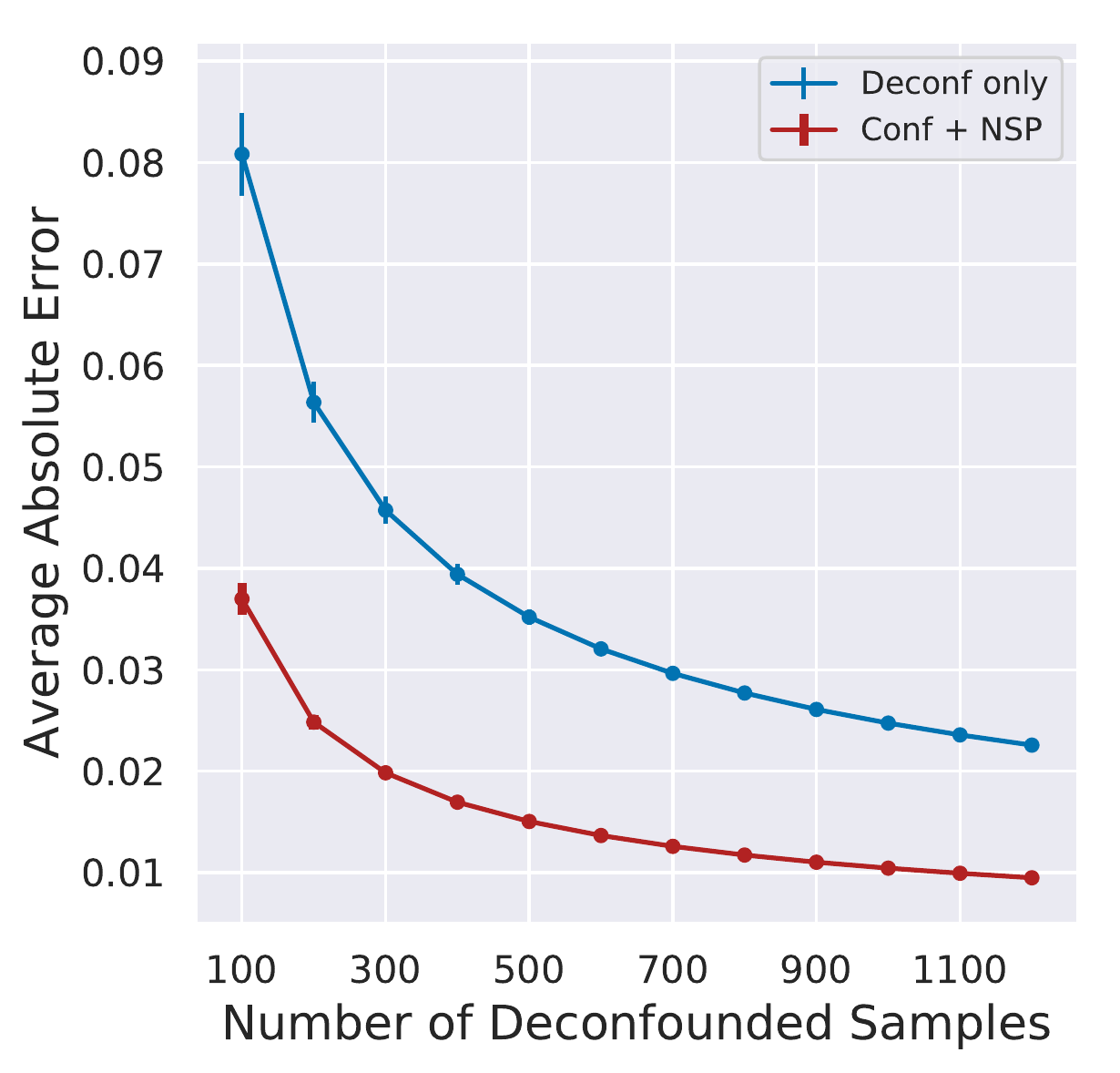}
    \includegraphics[width=0.49\textwidth]{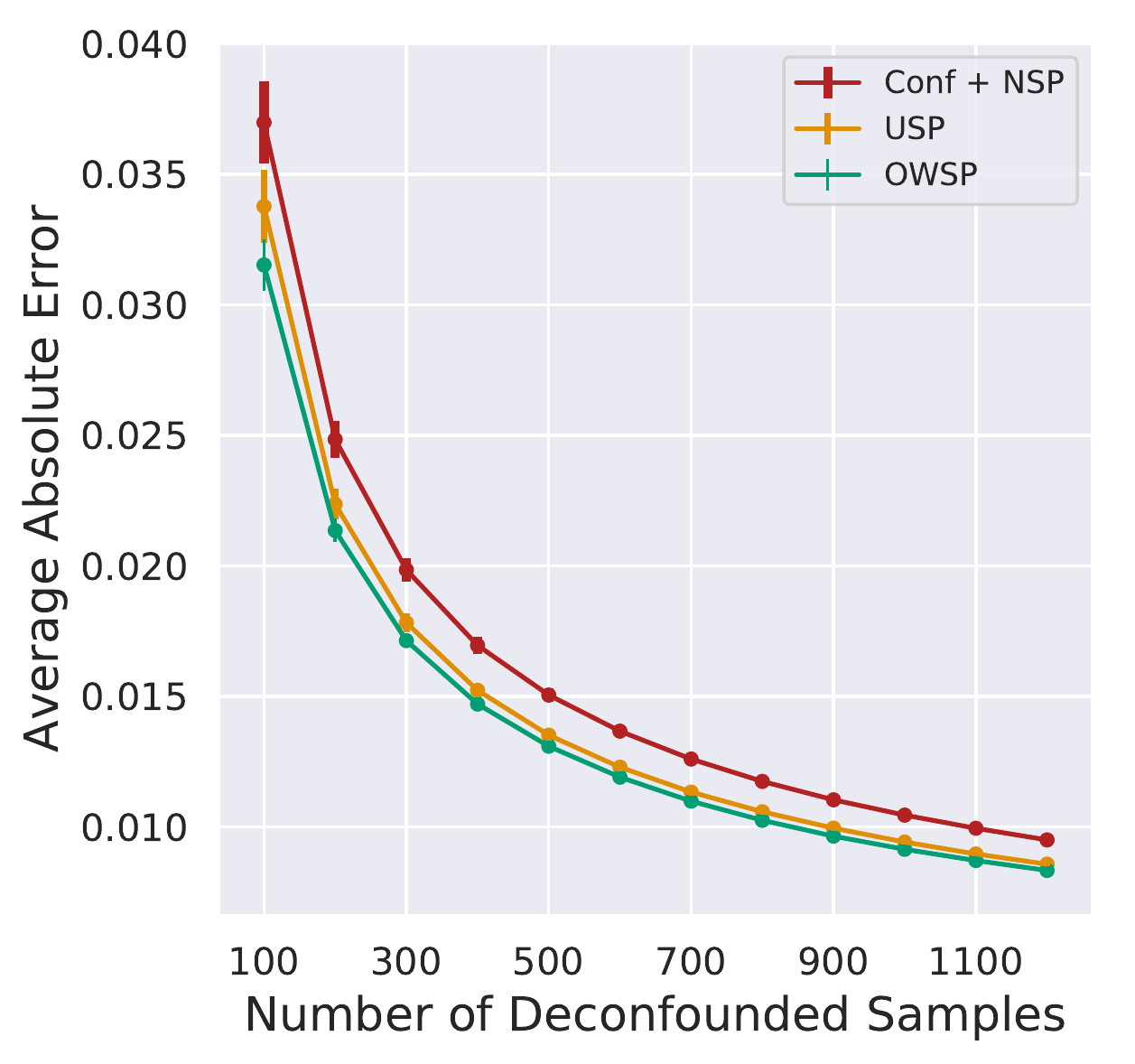}
    \end{minipage}
    \begin{minipage}{.295\textwidth}
    \centering
    \includegraphics[width=\textwidth]{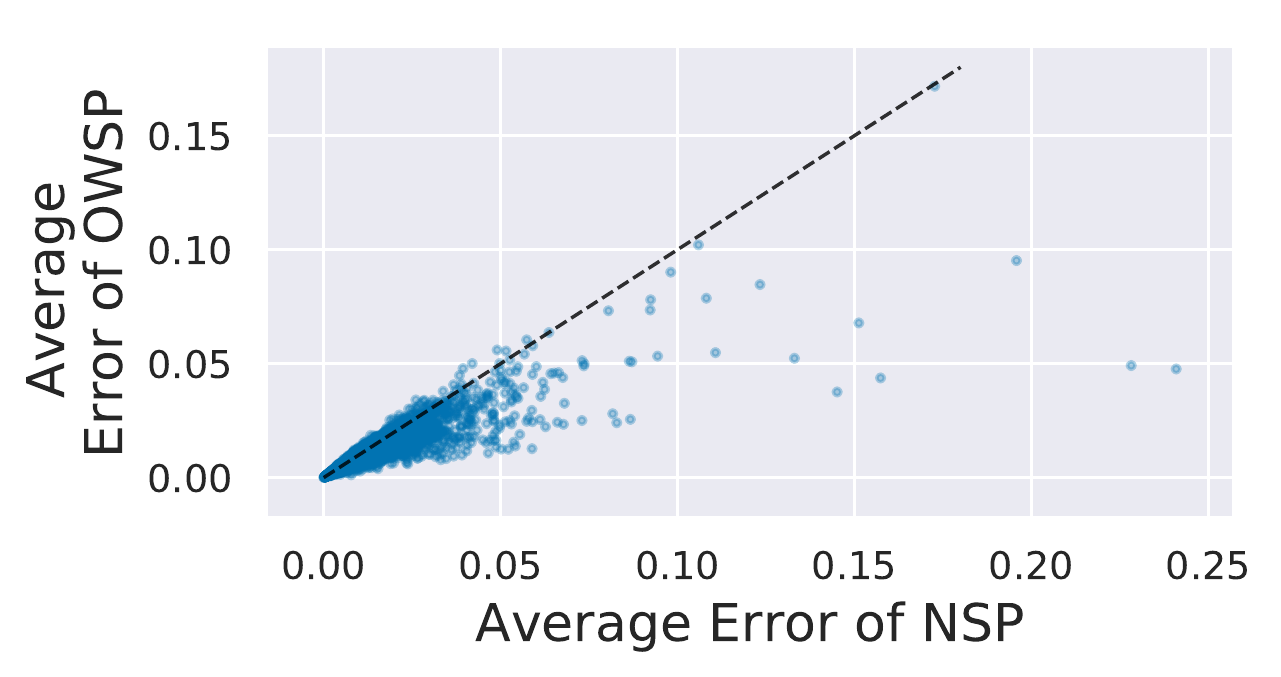}
    \includegraphics[width=\textwidth]{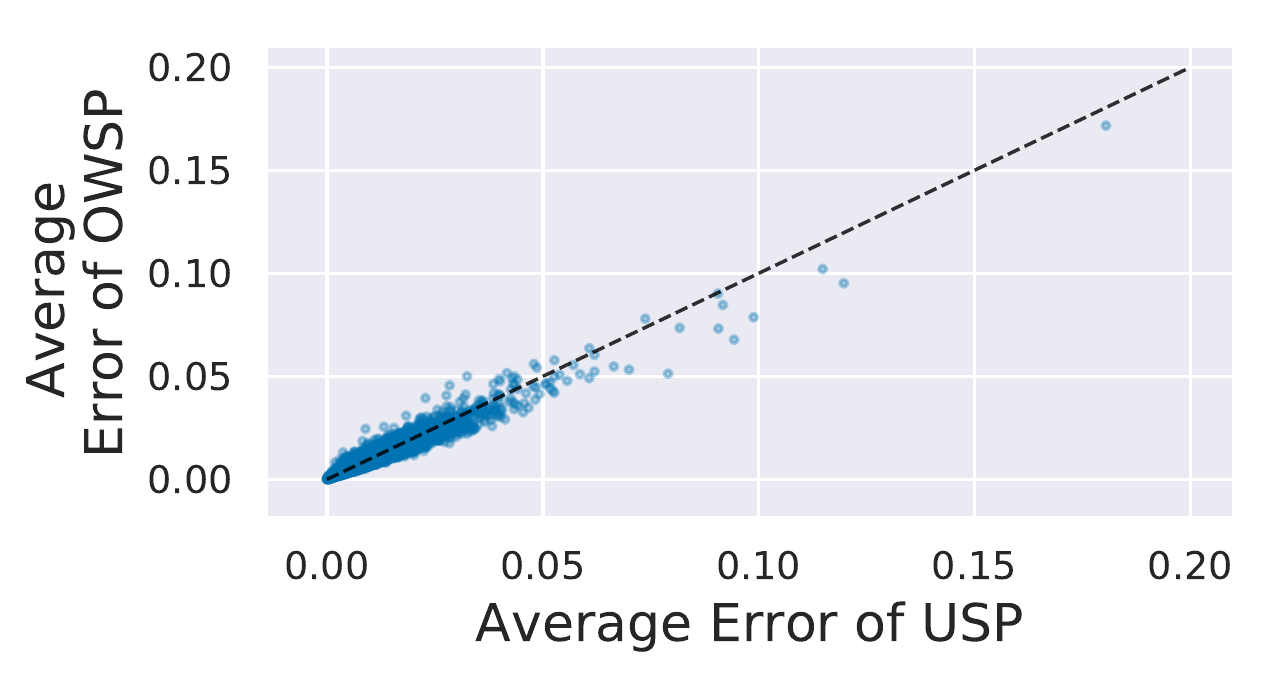}
    \end{minipage}
    \caption{Performance of the four
    policies over 13,000 distributions $P_{Y,T,Z}$, 
    given
    infinite confounded data. Left and Middle: averaged error over 
    13,000 distributions for varying numbers of deconfounded samples.
    Right: error comparison 
    (each point is a single distribution averaged over $100$ replications) 
    for 1,200 deconfounded samples.}
    \label{fig:agg} 
\end{figure*}

Let $M_{\mathrm{usp}}$ and $M_{\mathrm{owsp}}$ 
be the maximum values of $m_{\mathrm{usp}}$
and $m_{\mathrm{owsp}}$, respectively,
over all possible values of $\bf q$. 
\begin{restatable}{cor}{worstUSPOWSP} (Worst-Case Upper Bound Guarantee)
\label{cor:worstUSPOWSP}
\begin{align*}
    M_{\mathrm{usp}} &= \max_{t} \frac{4C\sum_{y}a_{yt}^2}{\beta^2(\sum_y a_{yt})^2}
    ; \;
    M_{\mathrm{owsp}}
    =\max_{t} \frac{2C}{\beta^2}\leq M_{\mathrm{nsp}}.
\end{align*}
\end{restatable}
First, note that $M_{\mathrm{owsp}}$ is independent 
of the confounded distribution $\bf a$. 
Furthermore, from the proof of Theorem~\ref{thm_m2},
we observe that OWSP is the unique policy 
that makes this upper bound independent of $\bf a$.
When comparing Corollaries~\ref{cor:worstNSP} 
and \ref{cor:worstUSPOWSP}, we observe that 
OWSP always dominates NSP when taking the worst case over $\bf q$'s.
{\red
Lastly, we provide the lower bounds of 
$\mu_\mathrm{nsp}, \mu_\mathrm{usp},$ and $\mu_\mathrm{owsp}$ 
that are analogous to Theorem~\ref{thm:general_lower}:
}

\begin{restatable}{theorem}{thmLowerPolicies}(Lower Bound)
\label{thm:lower_bound} For every $\bf a$, there exists a $\bf q$ such that 
$\mu_\mathrm{nsp}$
is at least
$$w_{\mathrm{nsp}}: = \frac{C_1}{\beta^2}\max_{t}\left(\frac{ a_{1t}(\sum_y a_{y\bar t})^2}{(\sum_y a_{yt})^2}, \frac{ a_{0t}(\sum_y a_{y\bar t})^2}{(\sum_y a_{yt})^2}\right);$$
similarly for  uniform selection policy: 
$$w_{\mathrm{usp}}:=  \frac{C_1}{\beta^2}\max_{t}\left(4\frac{ a_{1t}^2(\sum_y a_{y\bar t})^2}{(\sum_y a_{yt})^2},4\frac{ a_{0t}^2(\sum_y a_{y\bar t})^2}{(\sum_y a_{yt})^2}\right);$$
similarly for outcome-weighted sample selection policy: $$w_{\mathrm{owsp}}:= \frac{C_1}{\beta^2}\max_{t}\left(2\frac{ a_{1t}(\sum_y a_{y\bar t})^2}{\sum_y a_{yt}}, 2\frac{ a_{0t}(\sum_y a_{y\bar t})^2}{\sum_y a_{yt}}\right),$$
where $\bar t = 1-t$ and $C_1 \propto (k\beta - 1)^2\ln(\delta^{-1})\epsilon^{-2}$.
\end{restatable}
The proof  (Appendix~\ref{proof:lower_bound}) proceeds by construction. 
Table~\ref{tab:my_label} summarizes our 
worst-case upper bounds and instance-specific lower bounds. When comparing the constants $C$ and $C_1$, 
we observe that the upper bounds and lower bounds match in $k, \epsilon,$ and $\delta$, 
demonstrating the relative tightness of our analysis.

We have 
shown
the advantages of OWSP
\emph{given an infinite amount of confounded data}.
However, in practice,
the confounded data 
is
finite. In Appendix~\ref{app:finite}, 
we analyze the sample complexity upper bound 
of our algorithm under \emph{finite} confounded data. 
One new issue that arises with finite confounded data 
is that a sampling policy may not be feasible 
because there are not enough confounded samples to deconfound. 
In our experiments,
when this happens,
we approximate the target sampling policy as closely 
as is feasible (see Appendix~\ref{procedure}). 


}

\section{Experiments}
\label{sec:experiments}
Since the upper bounds 
that we derived in Section~\ref{sec:methods} are not necessarily tight, we first perform synthetic experiments to assess the tightness of our bounds. 
For the purpose of illustration, we focus on binary confounders $Z$ throughout this section, and denote $q_{yt} = P_{Z=1|Y,T}(y,t)$. We first compare the sampling policies in synthetic experiments on randomly chosen distributions $P_{Y,T,Z}$, 
measuring both the average and worst-case performance of each sampling policy.
We then measure the effect of having finite (vs. infinite) confounded data. Finally, we test the performance of OWSP on real-world data taken from a genetic database, COSMIC, that includes genetic mutations of cancer patients
\citep{tate2019cosmic, cosmic2019}.
Because this is (to our knowledge) the first paper
to investigate the problem of \emph{selective deconfounding},
the methods in described Section \ref{sec:lit} are not directly comparable to ours.
\begin{figure*}[t]
    \centering
    \begin{minipage}{0.28\textwidth}
    \includegraphics[width=\linewidth]{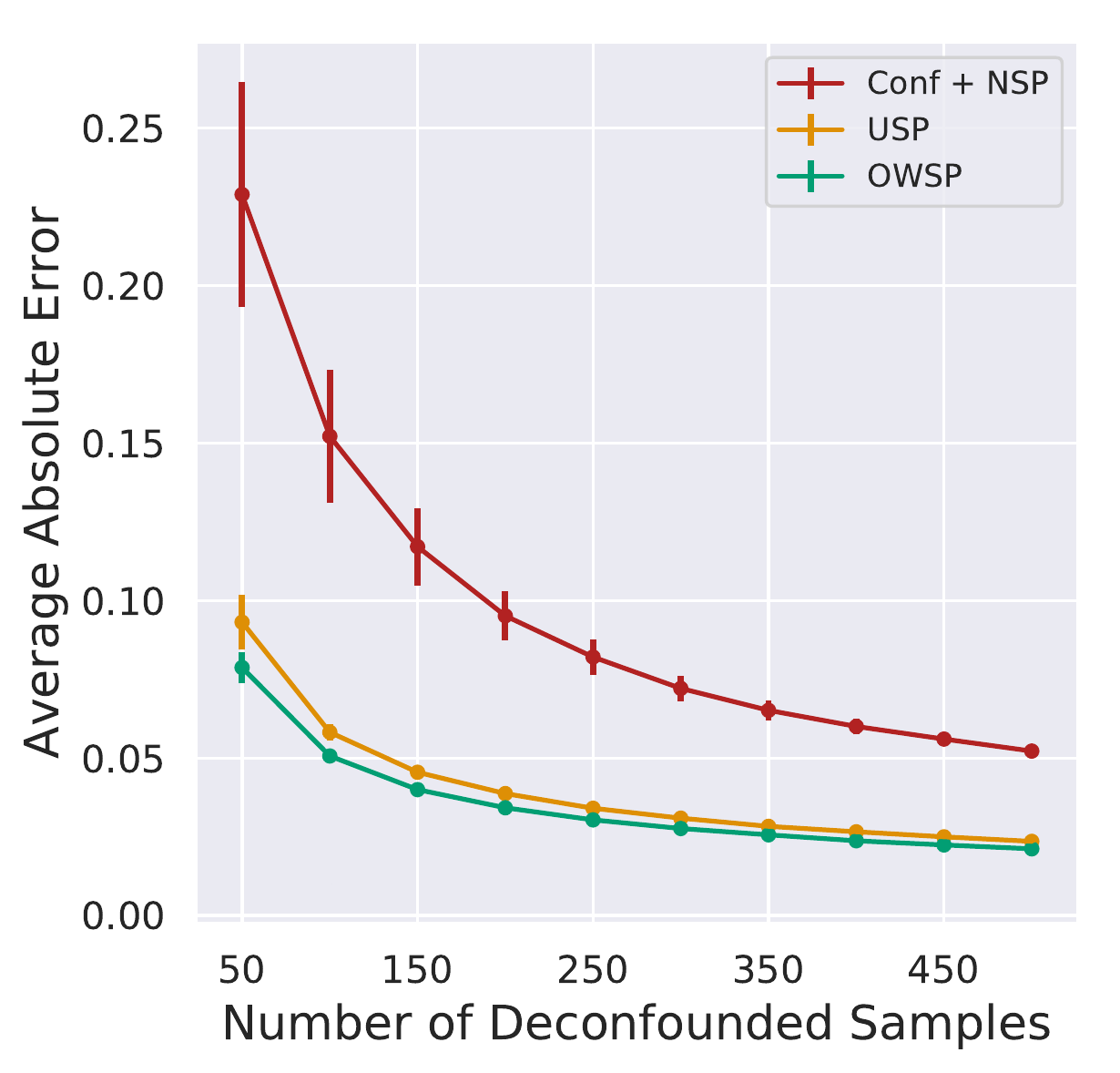}
    \includegraphics[width=\linewidth]{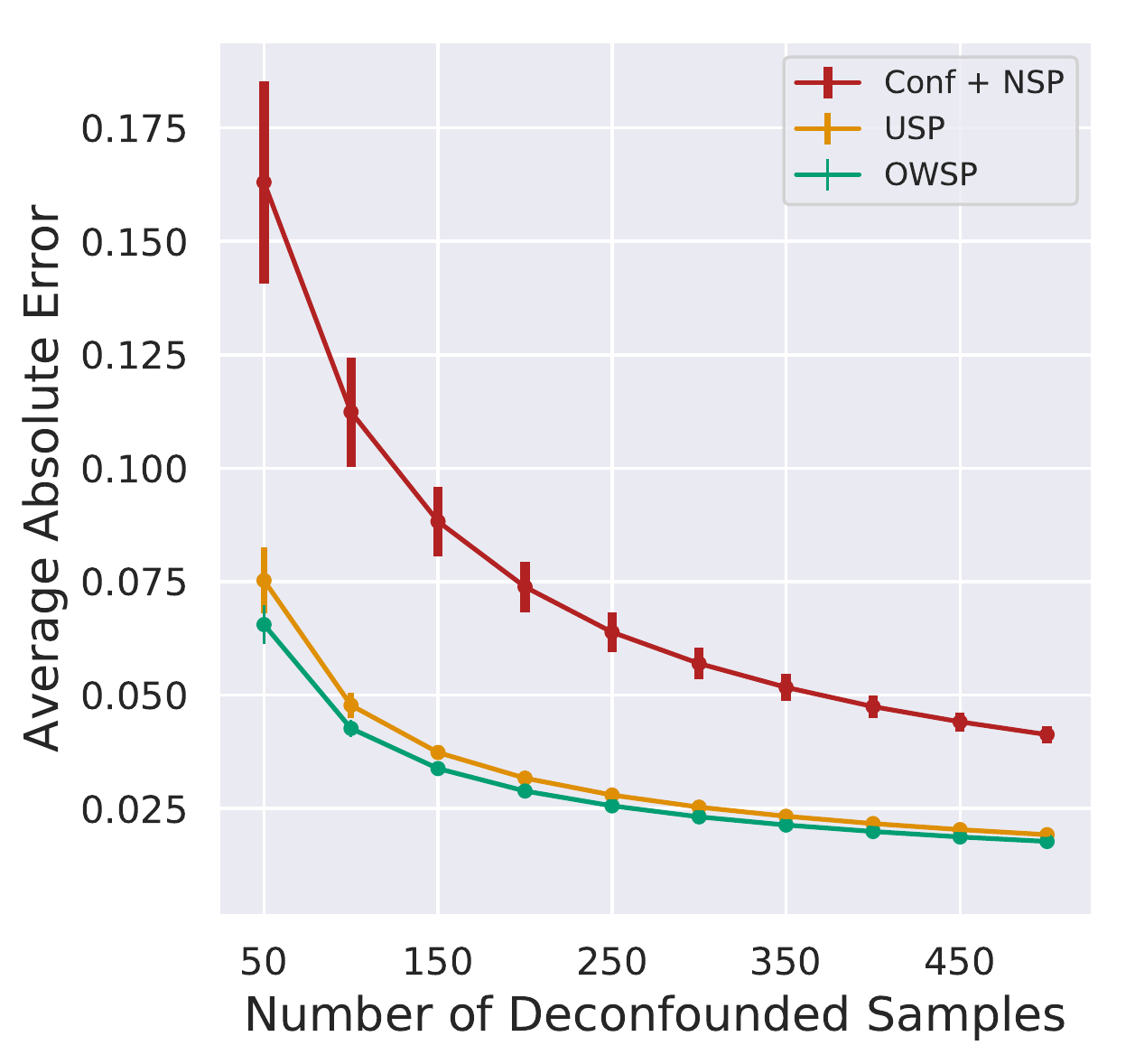}
    \end{minipage}
    \begin{minipage}{0.28\textwidth}
    \includegraphics[width=\linewidth]{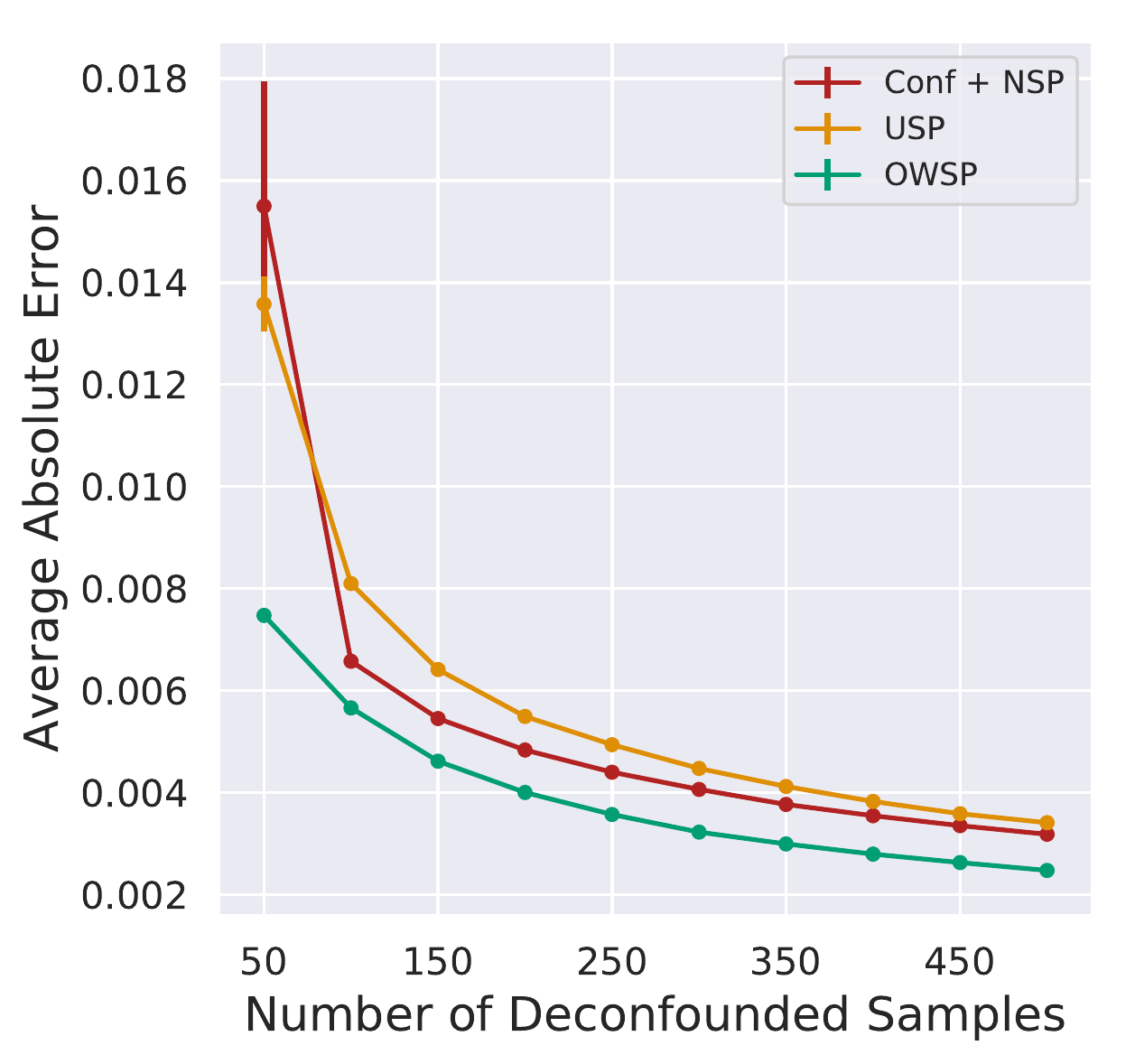}
    \includegraphics[width=\linewidth]{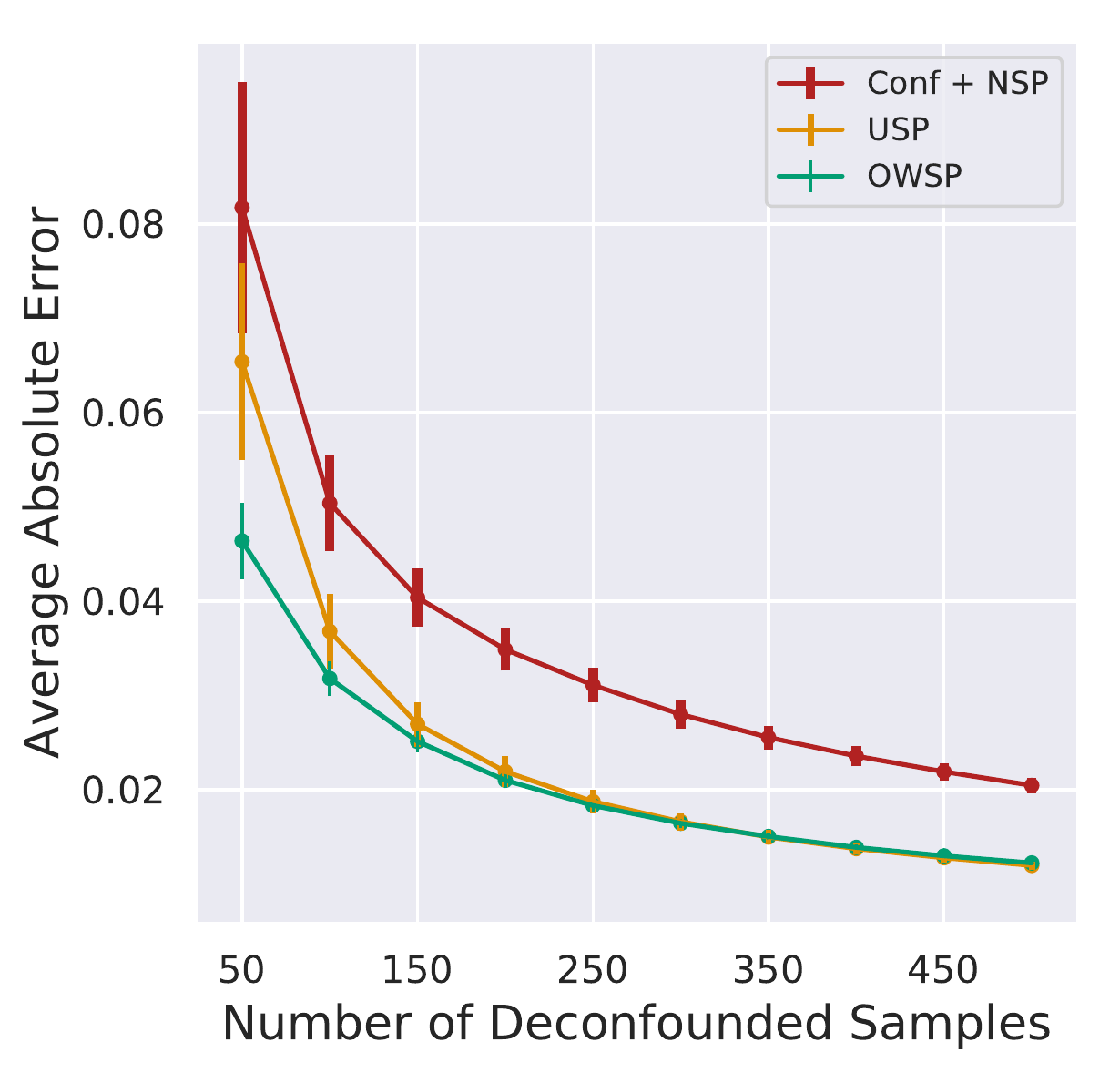}
    \end{minipage}
    \begin{minipage}{0.28\textwidth}
    \includegraphics[width=\linewidth]{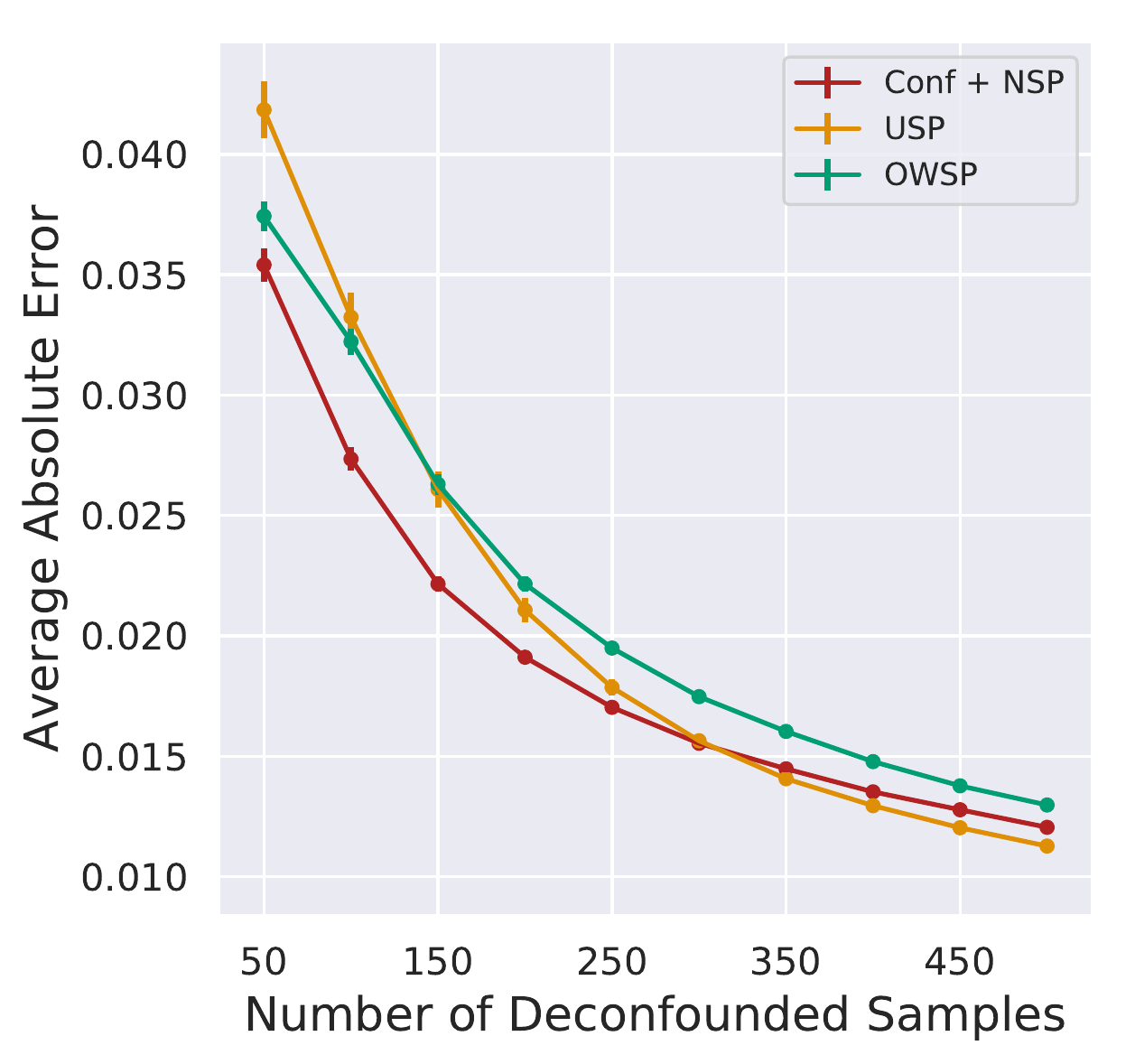}
    \includegraphics[width=\linewidth]{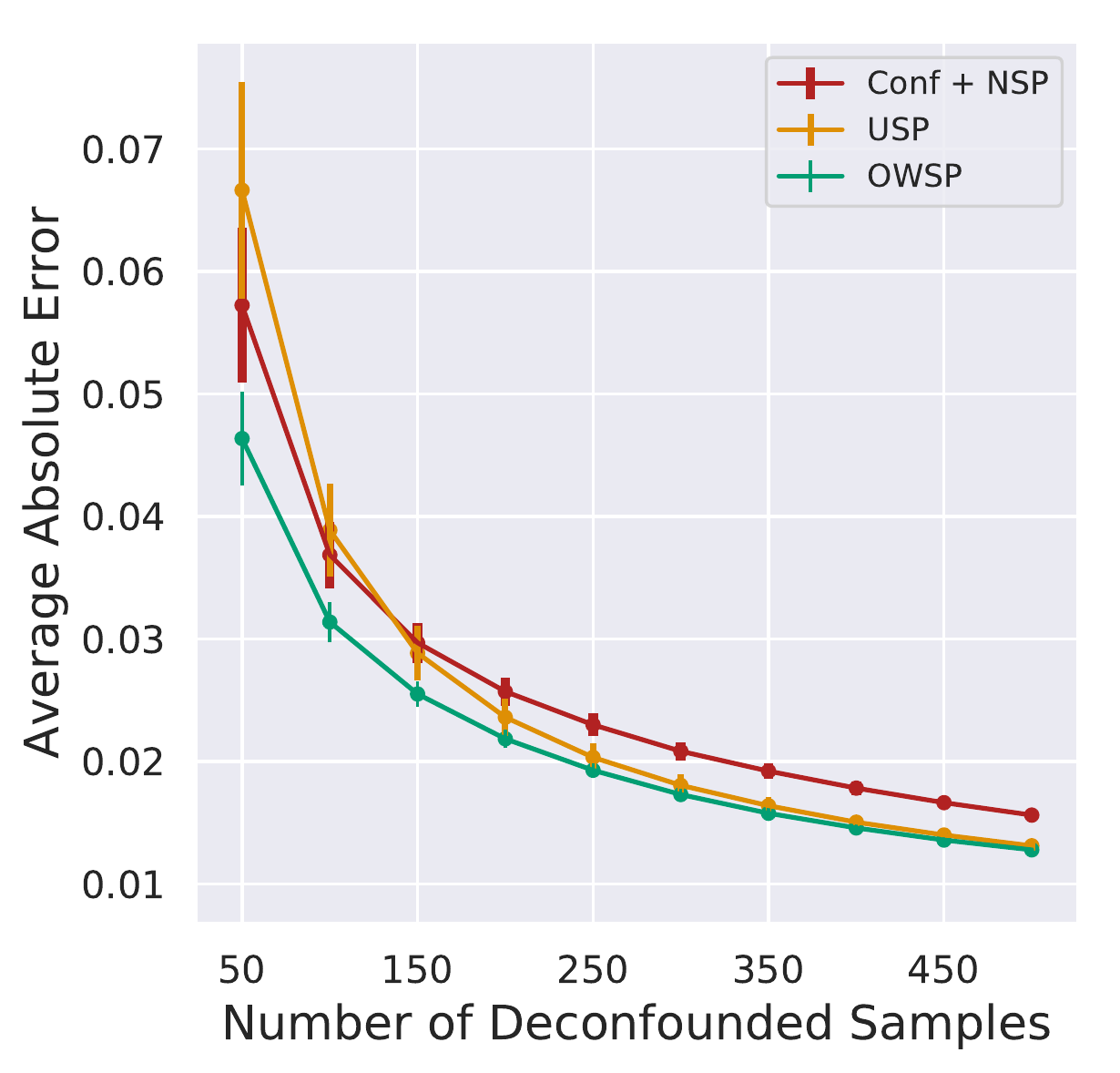}
    \end{minipage}
    \caption{Comparison of selection policies for adversarially chosen instances. Top (left) NSP performs the worst: $\mathbf{a}=(0.9,0.02,0.01,0.07)$ and $\mathbf{q}=(0.9,0.7,0.01,0.3)$. Top (middle) USP performs the worst: $\mathbf{a}=(0.79,0.01,0.02,0.18)$ and $\mathbf{q}=(0.5,0.01,0.05,0.5)$. Top (right) OWSP performs the worst: $\mathbf{a}=(0.5,0.01,0.19,0.3)$ and $\mathbf{q}=(0.05,0.5,0.055,0.4)$. Bottom: 
    the same $\mathbf{a}$'s 
    but averaged over $500$ $\mathbf{q}$'s drawn uniformly from $[0,1]^4$.}
    \label{examples}
\end{figure*}

\subsection{Infinite Confounded Data
}
Assuming access to infinite confounded data, we experimentally evaluate all four sampling methods 
for estimating the ATE: 
using deconfounded data alone, and using confounded data 
that has been selected according to NSP, USP, and OWSP. 
Let 
$\mathbf{a}:=\left(a_{00},a_{01},a_{10},a_{11}\right), \text{ and } \mathbf{q}:=\left( q_{00},q_{01},q_{10}, q_{11} \right),$ encoding the confounded and conditional distributions, respectively.
We evaluate the performance of four methods in terms of the {\em absolute error}, $|\widehat{\ATE} - \ATE|$. 
Because the variance of our estimators 
cannot be analyzed in closed form,
we report the variance of the \emph{absolute error} 
averaged over different instances in terms of the error bar in the figures.


\paragraph{
Randomly Generated Instances }
We first evaluate the four methods over a randomly selected set of distributions.
Figure~\ref{fig:agg} was generated by averaging  
over 13,000 instances, each with the distribution $P_{Y,T,Z}$ 
drawn uniformly from the
unit $7$-Simplex.
Every instance consists of $100$ replications, 
each with a random draw of 1,200 deconfounded samples. 
The absolute error is measured as a function of the number of deconfounded samples in steps of $100$ samples.
Figure \ref{fig:agg} (left) compares the use of deconfounded data along 
with the incorporation of confounded data selected naturally
(as in the comparison of Theorems \ref{thm_m0} and \ref{thm_m1}).
It shows that incorporating confounded data yields a significant improvement in estimation error. 
For example, achieving an absolute error of $0.02$ 
using deconfounded data alone requires more than 1,200 samples on average, while by incorporating confounded data, only $300$ samples are required. 
We observe that  
the variance of our estimator has decreased dramatically by incorporating infinite 
confounded 
data.
Having established the value of confounded data, 
Figure \ref{fig:agg} (middle) compares the three selection policies. 
We find that 
OWSP outperforms both NSP and USP
in terms of both the absolute error and the variance when averaged over joint distributions.
To compare the performance of our sampling policies on an instance level, we provide two scatter plots in  Figure~\ref{fig:agg} (right), each containing
the 13,000 
instances in the left figures
and averaged
over $100$ replications. 
The number of deconfounded samples is fixed at 1,200.
We observe that OWSP outperforms NSP and USP 
in the majority of instances.

\begin{figure*}
\centering
\begin{subfigure}
    \centering
    \includegraphics[width=0.26\textwidth]{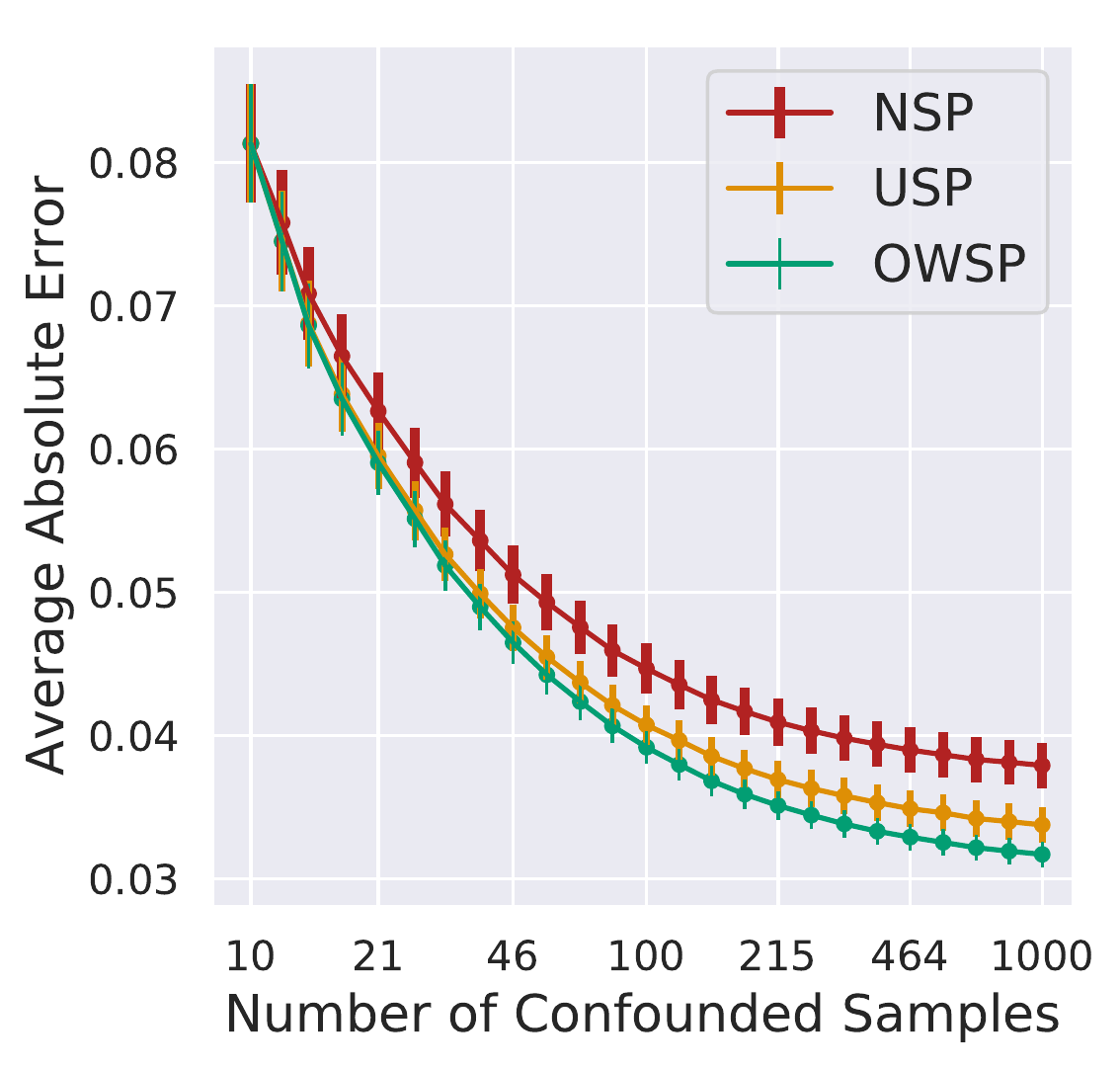}
    \end{subfigure}
\begin{subfigure}
\centering
    \includegraphics[width=0.355\textwidth]{figure/finite_scatter_NSP_OWSP681.pdf}
\end{subfigure}
\begin{subfigure}
    \centering
    \includegraphics[width=0.355\textwidth]{figure/finite_scatter_USP_OWSP681.pdf}
\end{subfigure}
    \caption{Experiment on finite confounded data over 13,000 distributions $P_{Y,T,Z}$, each averaged over $100$ replications. The number of deconfounded samples is fixed at $100$. Left: averaged over the 13,000 distributions. Middle and Right: error comparison at 681 confounded samples.}
    \label{fig:agg_finite}
\end{figure*}

\paragraph{Worst-Case Instances}
We evaluate the performance of the three selection policies 
on joint distributions chosen adversarially against each in Figure~\ref{examples}. 
%
The three sub-figures (the columns) 
correspond to instances where NSP, USP, and OWSP perform the worst, respectively, from the left to the right.
Each sub-figure 
is further subdivided: 
the top contains results for the single adversarial example
while the bottom 
is averaged over
$500$ $\mathbf{q}$'s sampled uniformly from $[0,1]^4$. 
The absolute error is averaged over 10,000 replications in the left figures and over $500$ in the right. 
In all cases, we draw $500$ deconfounded samples and measure 
the absolute error in steps of $50$ samples.
%
%
Figure~\ref{examples} (left) validates Corollary~\ref{cor}.
We observe that when the distribution of $\mathbf{a}$ is heavily skewed towards $(Y=0,T=0)$, 
OWSP and USP significantly outperform NSP. Figure~\ref{examples} (middle) shows that USP can underperform NSP, 
but when averaged over all possible values of $\mathbf{q}$, USP performs better than NSP. 
Figure~\ref{examples} (right)
shows
that OWSP can underperform NSP and USP,
but, when compared with the left and middle column, 
the performance of OWSP is close to that of NSP and USP. 
When averaged over all possible values of $\mathbf{q}$, 
OWSP outperforms both.
Moreover, OWSP's variance is the lowest across all scenarios.
Appendix~\ref{stories} provides 
examples 
in which each of these joint distributions could appear. 
\subsection{Finite Confounded Data}
Given only $n$ confounded data, we test the performance of the OWSP against NSP and USP.
In Figure~\ref{fig:agg_finite}, the absolute error is measured as a function of the number of confounded samples in step sizes that increment in the log scale from $100$ to 10,000 while fixing the number of deconfounded samples to $100$. 
Since when we only have $100$ confounded samples, the three sampling policies are identical, the error curves corresponding to NSP, USP and OWSP start at the same point on the top left corner in Figure~\ref{fig:agg_finite} (left). We observe that as the number of confounded samples increases, OWSP quickly outperforms NSP and USP on average, and the gaps between OWSP and the other two selection policies widen. 
Since we fix the number of deconfounded samples to be $100$, 1)
all three sampling policies are equivalent
when there are only $100$ confounded samples in the dataset
(i.e., we need to deconfound all $100$ confounded samples 
in all cases),
and 2) the average absolute errors of the three selection policies 
do not converge to $0$ in Figure \ref{fig:agg_finite}.
%
Figure~\ref{fig:agg_finite} (middle and right)
compare
the performance of OWSP with 
that of the NSP and USP, respectively, on an instance level.  
We observe that
OWSP dominates NSP and USP in the majority of instances
by both the absolute error and variance.
Note that if we fix the number of confounded samples
and increase the number of deconfounded samples (with $m\leq n$),
we observe that OWSP dominates USP and NSP 
when the number of deconfounded samples is small.
The gap shrinks as the number of deconfounded samples increases. When at $m=n,$ all three methods are equivalent.

\begin{figure*}
\centering
\begin{subfigure}
    \centering
    \includegraphics[width=0.303\textwidth]{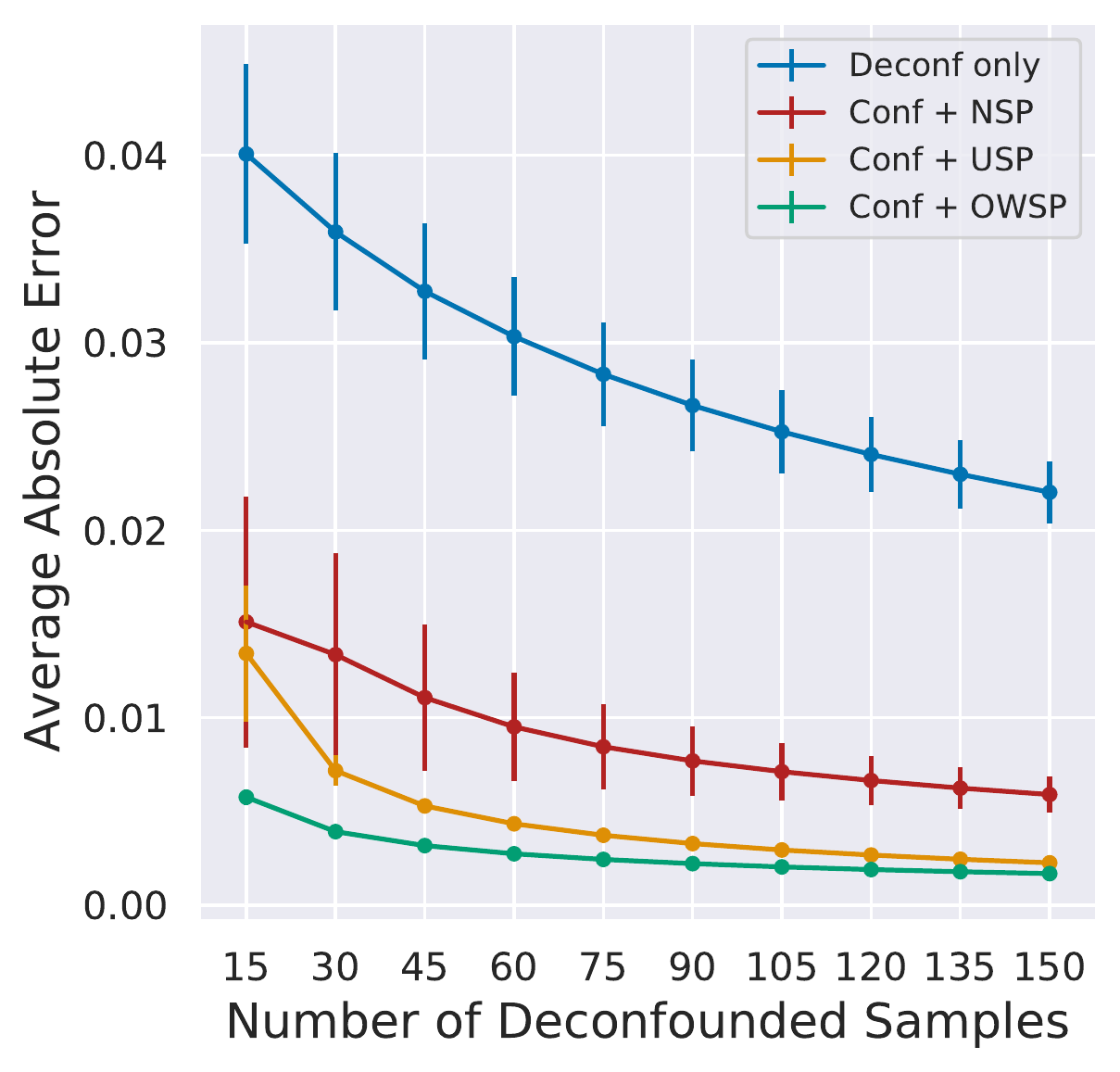}
    \end{subfigure}
\begin{subfigure}
    \centering
    \includegraphics[width=0.325\textwidth]{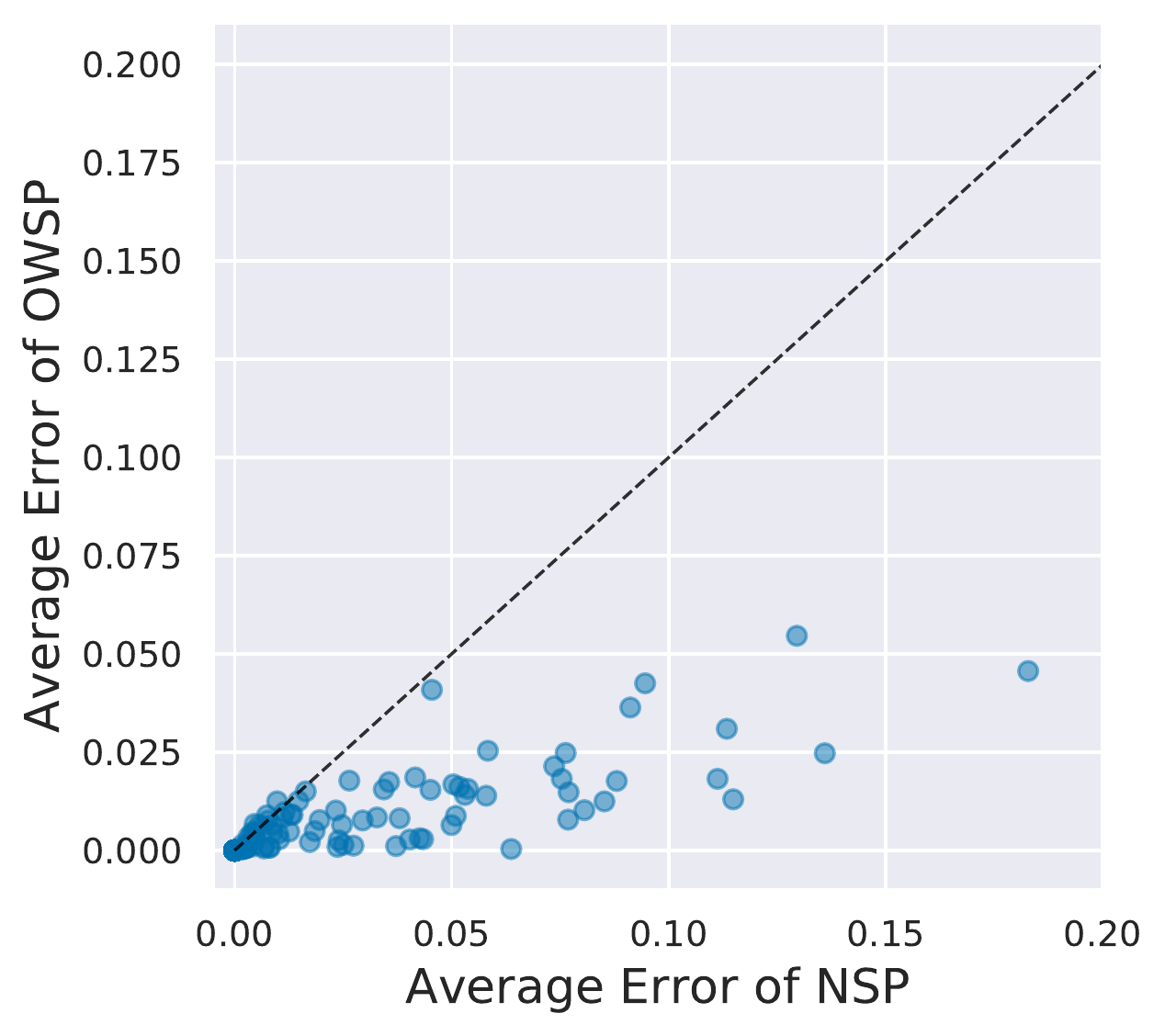}
\end{subfigure}
\begin{subfigure}
    \centering
    \includegraphics[width=0.32\textwidth]{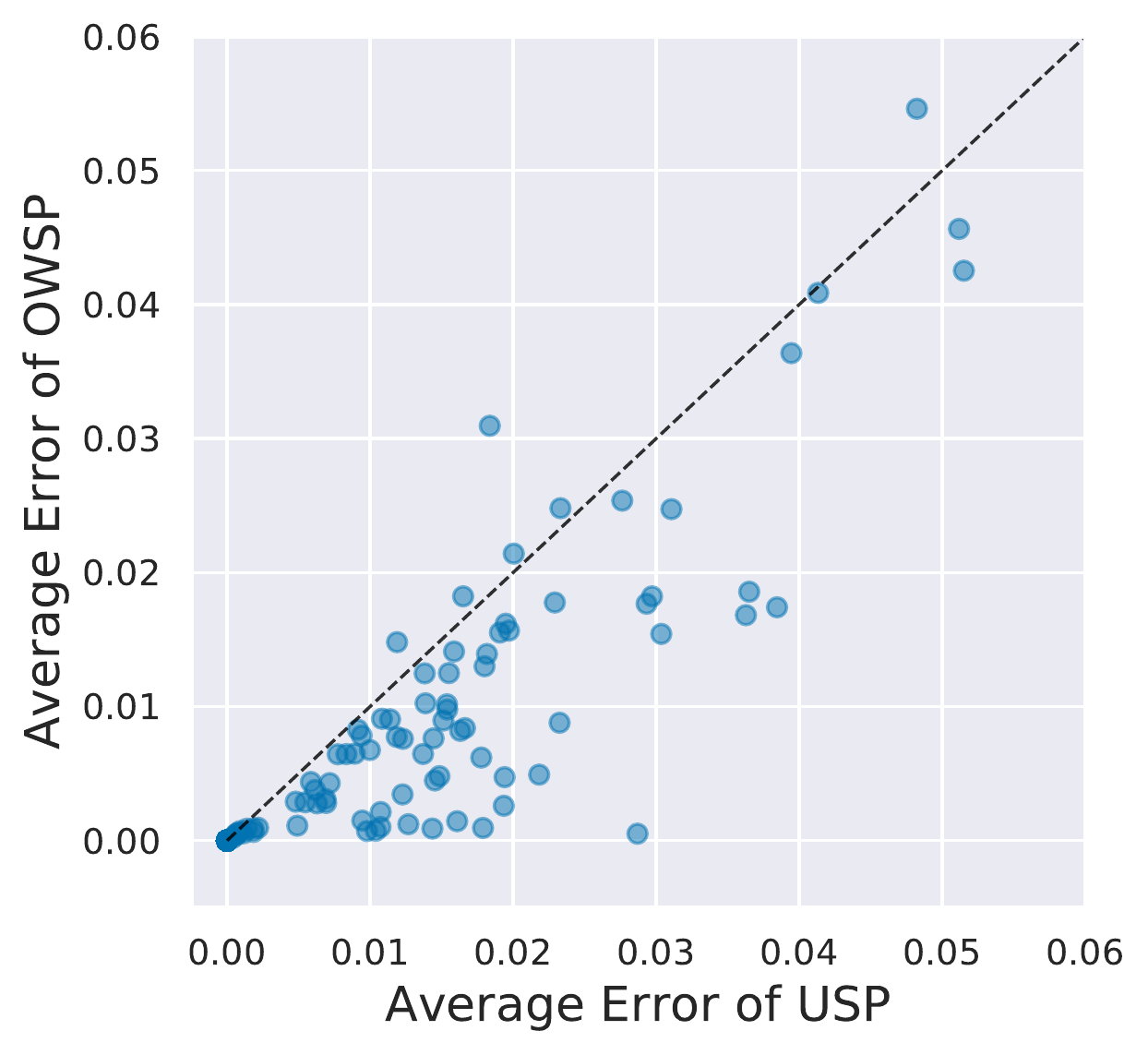}
\end{subfigure}
    \caption{Performance of the four sampling policies on the COSMIC dataset assuming infinite confounded data.  $275$ unique (cancer, mutation, mutation) combinations were extracted. Left: averaged over $275$ instances, and each averaged over 10,000 replications. Middle and Right: error comparison at $45$ deconfounded samples.}
    \label{fig:cosmic}
\end{figure*}

\subsection{Real-World Experiments: COSMIC}
\paragraph{Data  } Previously,
we chose the underlying distribution $P_{Y,T,Z}$ 
uniformly from the unit $7$-Simplex. 
However, real-world problems of interest may
not be uniformly distributed.
To illustrate the practicality of our methods,
we consider a real-world dataset,
picking three variables 
to be the outcome, treatment, and confounder, 
and artificially hiding the confounder for some samples. 
Finally, we evaluate our proposed sampling methods under 
the assumption that we have access to infinitely many confounded samples.
%
%
The Catalogue Of Somatic Mutations In Cancer (COSMIC) is a public database of DNA sequences of tumor samples.
It consists of
targeted gene-screening panels 
aggregated and manually curated over 25,000 peer reviewed papers. 
We focus on the variables: \code{primary cancer site} and \code{gene}.
Specifically, for 1,350,015 cancer patients,
we observe their 
cancer types, and for a subset of genes, whether or not a mutation was observed in each gene. 

\paragraph{Causal Models  } 
In our experiments, 
we designate cancer type as the outcome,
a particular mutation as the treatment,
and 
another mutation
as the confounder---this 
setup seems reasonable
because 
it is well known 
that multiple genetic mutations are correlated with individual cancer types \citep{knudson2001two}, 
and that mutations can cause 
both cancer itself and other mutations. 
As a concrete example, mutations in the genes 
that code RNA polymerases 
(responsible for ensuring the accuracy 
of replicating RNA)
are found to increase the likelihood 
of both other mutations 
and certain cancer types \citep{rayner2016panoply}.
The setting where the treatment mutation and cancer outcome are observed 
and the confounding mutation is unobserved is plausible because it is common that the majority of patients 
only have a subset of genes sequenced (e.g. from a commercial panel).
The top $6$ 
most commonly mutated genes were selected as treatment and confounder candidates. 
For each combination of a cancer type and two of these genes, 
we removed patients for whom these genes was not sequenced,
and kept all pairs that had at least $40$ patients 
in each of the four treatment-outcome groups (to ensure our deconfounding policies would have enough samples to deconfound).
This procedure gave us $275$ unique combinations of a cancer (outcome),  mutation (treatment), and another mutation (confounder).
%
%
Since on average, each \{cancer, mutation, mutation\} tuple contains around 125,883 patients, 
we took the estimated empirical distribution as the data-generating distribution 
and applied the ATE formula described in Section~\ref{sec:methods} to obtain the true ATE.
To model the unobserved confounder, we hid the confounding mutation parameter, only revealing it to a sampling policy when it requested a deconfounded sample.
We compared the use of deconfounded data 
along with the incorporation of confounded data under
the three sampling selection polices: NSP, USP, and OWSP. 

\paragraph{Results  } 
Figure~\ref{fig:cosmic} (left)
was generated with these $275$ instances 
each repeated for 10,000 replications. 
The absolute error is measured as a function of the number of deconfounded samples in step sizes of $15$. First, similar to Figure~\ref{fig:agg}, 
we observe that incorporating confounded data reduces 
both the absolute estimation error and the variance of the estimator
by a large margin. 
Note the improvement of OWSP over NSP is larger in this case
as compared to that seen in Figure~\ref{fig:agg}. 
Furthermore, when the number of deconfounded samples is small, OWSP outperforms USP.
Note that Figure~\ref{fig:cosmic} (left) does not start with 0 because absent any deconfounded data,
the estimated ATE is the same for all sampling policies.
%
In Figure~\ref{fig:cosmic} (middle, right), 
we fix the number of deconfounded samples to be $45$ and
compare the performance of OWSP against that of NSP and USP, respectively. 
Both figures contain the $275$ instances in the left figure,
averaged over 10,000 replications.
We observe that under this 
setup, 
OWSP dominates NSP in all instances, and 
outperforms USP in the majority of instances.


\section{Conclusion}
\label{sec:conclusions}
We propose the problem of causal inference with
\emph{selectively deconfounded} data.
{\red This problem is particularly motivated by the scenarios where interventions on treatment is not available. We theoretically analyze the upper bounds and lower bounds on}
the amount of deconfounded data required 
under each sample selection policy. 
{\red In addition, we theoretically demonstrate that the best-case gain of our proposed policy OWSP is unbounded when compared with NSP while the worst-case relative performance is bounded.}
We point to several promising directions 
for potential future research.
First, we are currently extending our analysis to the adaptive case using ideas from active learning and combinatorial optimization.
Second, we plan to extend our results to 
more general causal problems, including
linear and semi-parametric causal models.
Finally, we may extend the idea of
selective revelation of information beyond 
confounders to incorporate
mediators and proxies.


\newpage
\subsubsection*{Acknowledgments}
We wish to thank Sivaraman Balakrishnan and Uri Shalit for their valuable feedback. We would also like to thank Amazon AI, Facebook, Salesforce, the NSF, UPMC, the PwC Center, and DARPA for their support of our research.


\bibliographystyle{plainnat}
\bibliography{causal}

\newpage
\onecolumn
\appendix

\section{The Generalization of Our Models}
\subsection{Multiple Confounders}
\label{app_multiConfounder}
In this section, we show that because we do not impose any independence assumption on the set of confounder,  
revealing the values of all confounders offers maximal information 
on the joint distribution of the confounders. In particular, we will illustrate through the case where we have two binary confounders. The extension to multiple categorical confounders is straight forward.

In the case where we have two binary confounders $Z_1$ and $Z_2$, we can express the ATE as follows:
$$\ATE = \sum_{z_1, z_2} \Big(P_{Y|T,Z_1,Z_2}(1|1, z_1, z_2) - P_{Y|T,Z_1,Z_2}(1|0, z_1, z_2)\Big)P_{Z_1, Z_2}(z_1, z_2).$$
With an infinite amount of confounded data, we are provided with the joint distribution $P_{Y,T}(y,t)$. Thus, it remains to estimate the conditional distributions $P_{Z_1, Z_2|Y,T}$. In our paper, we consider only the non-adaptive policies, i.e., the number of samples to deconfound in each group $(y,t)$ is fixed a priori. In the case where the costs of revealing the values of $Z_1$ and $Z_2$ are the same and we do not have any prior knowledge on the distributions of $Z_1$ and $Z_2$, the variables $Z_1$ and $Z_2$ becomes exchangeable. In the case where the sample selection policies are completely non-adaptive (which is the case that we consider in this paper), by the symmetry of the variables $Z_1$ and $Z_2$, we have that
sampling from the joint distribution of $Z_1$ and $Z_2$ yields the maximum expected information on the value of the ATE. (Note that if the confounders take categorical values of different sizes and we allow adaptive policies, then we might be able to reduce the total cost of deconfounding to estimate the ATE to within a desired accuracy level.)

\subsection{Pretreatment Covariates}
\label{app:pretreatment}
In the case where we have known pretreatment covariates $X$, our model can be applied in estimating the individual treatment effect where we make the common ignorability assumption on the pretreatment covariates $X$ and the confounder $Z$:
given pretreatment covariates $X$ and the confounder $Z$, the values of outcome variable, $Y=0$ and $Y=1$, are independent of treatment assignment.
In this case, the distributions $P_{Y,T}(y,t)$ and $P_X(x)$ are known and the Individual Treatment Effect (ITE):
\begin{align}
    \mathrm{ITE} &= \sum_{z,x} \Big(P_{Y|T, Z, X}(1|1,z,x)-P_{Y|T, Z, X}(1|0,z,x)\Big)P_{Z,X}(z,x)\nonumber
    \\&
    =\sum_{z,x} \Big(P_{Y|T, Z, X}(1|1,z,x)-P_{Y|T, Z, X}(1|0,z,x)\Big)P_{Z|X}(z|x)P_X(x).\label{eq:covariate}
\end{align}
Note that in Equation (\ref{eq:covariate})
the only distributions we need to estimate are the conditional distributions $P_{Z|Y,T,X}$. 
The values of
$P_{Y|T,Z,X}$ and $P_{Z|X}$ can be calculated from $P_{Z|Y,T,X}$ by first conditioning the confounded distributions $P_{Y,T}$ on the values of the pretreatment covariates $X$, i.e., we first subsample all confounded (outcome, treatment) pairs for a fixed value of $X$, $X=x$, and then within
each subsample, estimate the conditional distributions $P_{Z|Y,T,X}$ by applying our methods. To obtain ITE, we weight the estimates we obtain from all subsamples by $P_X(x)$.

\section{Proofs}\label{app_proofs}
\subsection{Review of Classical Results in Concentration Inequalities}
Before embarking on our proofs, we state some classic results that we will use frequently. 
The following concentration inequalities are part of 
a family of results collectively referred to as
\emph{Hoeffding's inequality} (e.g., see \cite{vershynin2018high}).

\begin{lemma}[Hoeffding's Lemma]\label{lemma:hoeffding}
Let X be any real-valued random variable with expected value  $\mathbb{E}[X]=0$, such that $ a\leq X\leq b$ almost surely. Then, for all $\lambda \in R$,
$\mathbb{E} \left[\exp(\lambda X)\right]\leq \exp \Big(\frac {\lambda ^{2}(b-a)^{2}}{8}\Big).$

\end{lemma}

\begin{theorem}[Hoeffding's inequality for general bounded r.v.s] Let $X_1,..., X_N$ be independent random variables such that $X_i\in[m_i,M_i], \forall i$. Then, for $t>0$, we have $P\left(\left|\sum_{i=1}^N \left(X_i - \mathbb{E}[X_i]\right)\right|\geq t\right)\leq 2\exp\left(-\frac{2t^2}{\sum_{i=1}^N (M_i-m_i)^2}\right)$.
\label{hoeffding}
\end{theorem}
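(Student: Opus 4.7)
The plan is to apply the standard Chernoff-bound / Laplace-transform approach. Let $S := \sum_{i=1}^N (X_i - E(X_i))$. Because the stated bound concerns $|S|$, it suffices by symmetry --- applying the one-sided tail bound also to $-S$ and then taking a union bound --- to prove the one-sided inequality $P(S \geq t) \leq \exp\!\big(-2t^2/\sum_{i=1}^N (M_i-m_i)^2\big)$. The first step is to apply Markov's inequality to the exponential: for any $\lambda > 0$,
\begin{equation*}
P(S \geq t) \leq e^{-\lambda t}\, E[e^{\lambda S}] = e^{-\lambda t}\prod_{i=1}^N E[e^{\lambda(X_i - E(X_i))}],
\end{equation*}
where the factorization uses independence of the $X_i$'s.

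The main step, and the principal obstacle, is to establish \emph{Hoeffding's lemma}: if $Y$ is mean-zero with $Y \in [a,b]$ almost surely, then $E[e^{\lambda Y}] \leq \exp(\lambda^2 (b-a)^2/8)$. I would prove this via convexity of the exponential. Because $y \mapsto e^{\lambda y}$ is convex, on $[a,b]$ it lies below the secant line, giving $e^{\lambda y} \leq \frac{b-y}{b-a}\,e^{\lambda a} + \frac{y-a}{b-a}\,e^{\lambda b}$; taking expectations and using $E[Y]=0$ (note $a\leq 0 \leq b$) yields $E[e^{\lambda Y}] \leq \frac{b}{b-a}e^{\lambda a} - \frac{a}{b-a}e^{\lambda b}$. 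Writing this right-hand side as $e^{\varphi(\lambda)}$ with $p := -a/(b-a) \in [0,1]$, a direct computation gives $\varphi(0)=\varphi'(0)=0$ and $\varphi''(\lambda) = (b-a)^2\, p(1-p)e^{\lambda(b-a)}/(1-p+pe^{\lambda(b-a)})^2 \leq (b-a)^2/4$ uniformly in $\lambda$ (using $p(1-p)\leq 1/4$). A second-order Taylor expansion then yields $\varphi(\lambda) \leq \lambda^2 (b-a)^2 / 8$.

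With the lemma in hand, apply it to each centered term $Y_i := X_i - E(X_i)$, which is mean-zero and lies in an interval of length $M_i - m_i$. Combined with the Chernoff step, this produces
\begin{equation*}
P(S \geq t) \leq \exp\!\Big(-\lambda t + \tfrac{\lambda^2}{8}\sum_{i=1}^N (M_i-m_i)^2\Big).
\end{equation*}
The final step is a one-variable optimization over $\lambda > 0$: the quadratic in $\lambda$ inside the exponent is minimized at $\lambda^\star = 4t / \sum_{i=1}^N (M_i-m_i)^2$, and substitution gives the one-sided tail $\exp\!\big(-2t^2/\sum_{i=1}^N (M_i-m_i)^2\big)$. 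Symmetrizing via the union bound described above introduces the factor of $2$. All steps other than Hoeffding's lemma are routine; the convexity-plus-Taylor argument for the lemma is the only place calculus is really needed.
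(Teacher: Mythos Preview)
Your argument is the standard Chernoff--Hoeffding proof and is correct. Note, however, that the paper does not actually supply a proof of this statement: Hoeffding's inequality is simply recalled in the appendix as a classical concentration result and then applied in the proofs of Theorems~\ref{thm_m0} and~\ref{thm_m2}, so there is no paper proof to compare against.

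One small point in your sketch of Hoeffding's lemma: the inequality $\varphi''(\lambda)\leq (b-a)^2/4$ does not follow from ``$p(1-p)\leq 1/4$'' alone, since the remaining factor $e^{\lambda(b-a)}/(1-p+pe^{\lambda(b-a)})^2$ is not obviously bounded by $1$. The clean justification is to write
\[
\frac{p(1-p)\,u}{(1-p+pu)^2}=v(1-v),\qquad v:=\frac{pu}{1-p+pu}\in(0,1),\quad u:=e^{\lambda(b-a)},
\]
and then use $v(1-v)\leq 1/4$. This is a cosmetic fix; the stated bound and the rest of the derivation are fine.
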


To begin, recall the notation introduced in Section \ref{sec:methods}: we model the binary-valued treatment, the binary-valued outcome, and the categorical confounder 
as the random variables $T \in \{0,1\}$, $Y \in \{0,1\}$, 
and $Z \in \{1,\ldots,k\}$, respectively.
The underlying joint distribution of these three random variables
is represented as $P_{Y,T,Z}(\cdot,\cdot,\cdot)$.
To save on space for terms that are used frequently,
we define the following shorthand notation:
\begin{align*}
    p_{yt}^z &= P_{Y,T,Z}(y, t, z), \\
    a_{yt} &= P_{Y,T}(y,t), \\
    q^z_{yt} &= P_{Z|Y,T}(z|y,t). 
\end{align*}
These terms appear frequently because, to estimate 
the entire joint distribution on $Y,T,Z$ (the $p_{yt}^z$'s), 
it suffices to estimate the joint distribution on $Y,T$ (the $a_{yt}$'s),
along with the conditional distribution of $Z$ on $Y,T$ 
(the $q_{yt}^z$'s): $$p_{yt}^z = a_{yt}q_{yt}^z.$$ 
Finally, let $\hat{p}_{yt}^z, \hat{a}_{yt}^z$, 
and $\hat{q}_{yt}^z$ be the empirical estimates 
of $p_{yt}^z, a_{yt}^z,$ and $q_{yt}^z$, respectively, using the MLE.

\subsection{Proof of Theorem~\ref{thm_m0}}
\label{proof:thm_m0}

\thmbase*

\begin{proof}[Proof of Theorem~\ref{thm_m0}]
This proof proceeds as follows: first, we prove a sufficient (deterministic) condition, on the errors of our estimates of $p_{yt}^z$'s, under which $|\widehat{\ATE}-\ATE|$ is small. Second, we show that the errors of our estimates of $p_{yt}^z$'s are indeed small with high probability.

\paragraph{Step 1:} First, we can write the ATE in terms of the $p_{yt}^z$'s as follows:
\begin{equation*}
\ATE = \sum \limits_z \left(P_{Y|T,Z}(1|1,z)-P_{Y|T,Z}(1|0,z)\right)P_{Z}(z)
= \sum \limits_z
\left(
\left(\frac{p_{11}^z}{\sum \limits_y p_{y1}^z} - \frac{p_{10}^z}{\sum \limits_y p_{y0}^z}\right)
\left(\sum \limits_{y,t} p_{yt}^z\right)\right).
\end{equation*}

In order for the ATE to be well-defined, we assume $\sum_y p_{yt}^z \in (0,1)$ for all $t,z$ throughout.
We can then decompose $|\widehat{\ATE}-\ATE|$:
\begin{align*}
|\widehat{\ATE}-\ATE| 
&=
\left|\sum \limits_z
\left(
\left(\frac{\hat{p}_{11}^z}{\sum \limits_y \hat{p}_{y1}^z} - \frac{\hat{p}_{10}^z}{\sum \limits_y \hat{p}_{y0}^z}\right)
\left(\sum \limits_{y,t} \hat{p}_{yt}^z\right)
-
\left(\frac{p_{11}^z}{\sum \limits_y p_{y1}^z} - \frac{p_{10}^z}{\sum \limits_y p_{y0}^z}\right)
\left(\sum \limits_{y,t} p_{yt}^z\right)\right)
\right|
\\&
\leq
\sum \limits_z 
\left|
\left(\frac{\hat{p}_{11}^z}{\sum \limits_y \hat{p}_{y1}^z} - \frac{\hat{p}_{10}^z}{\sum \limits_y \hat{p}_{y0}^z}\right)
\left(\sum \limits_{y,t} \hat{p}_{yt}^z\right)
-
\left(\frac{p_{11}^z}{\sum \limits_y p_{y1}^z} - \frac{p_{10}^z}{\sum \limits_y p_{y0}^z}\right)
\left(\sum \limits_{y,t} p_{yt}^z\right)
\right|
    .
\end{align*}
Thus, in order to upper bound $\left|\widehat{\ATE} - \ATE \right|$ by some $\epsilon$, it suffices to show that \begin{equation} \label{eqn:B11}
    \left|
\left(\frac{\hat{p}_{11}^z}{\sum \limits_y \hat{p}_{y1}^z} - \frac{\hat{p}_{10}^z}{\sum \limits_y \hat{p}_{y0}^z}\right)
\left(\sum \limits_{y,t} \hat{p}_{yt}^z\right)
-
\left(\frac{p_{11}^z}{\sum \limits_y p_{y1}^z} - \frac{p_{10}^z}{\sum \limits_y p_{y0}^z}\right)
\left(\sum \limits_{y,t} p_{yt}^z\right)
\right| \le \frac{\epsilon}{k},\;\; \forall z.
\end{equation} 
\paragraph{Step 2:}To
bound the above terms, we first 
derive Lemma~\ref{lemma1} for bounding the error of the product of two estimates in terms of their two individual errors:

\begin{lemma}\label{lemma1}
For any $ u, \hat{u}\in [-1,1]$, and $v,\hat{v}\in [0,1]$,
suppose there exists $\epsilon,\theta \in (0,1)$ such that all of the following conditions hold:
\begin{enumerate}
    \item $|u-\hat u|\leq(1-\theta)\epsilon$
    \item $|v-\hat v|\leq\theta\epsilon$
    \item $u+\epsilon\leq 1$
    \item $v+\epsilon\leq 1$
    \item $\epsilon\leq\min(u,v)$
\end{enumerate}
Then, $|uv-\hat{u}\hat{v}| \leq \epsilon$.
\end{lemma}
\begin{proof}[Proof of Lemma~\ref{lemma1}]
Since $\left|u-\hat u\right|\leq(1-\theta)\epsilon$, we have $\hat{u}\in[u-(1-\theta)\epsilon, u+(1-\theta)\epsilon]$, and similarly, from $|v-\hat v|\leq\theta\epsilon$, we have $\hat{v} \in [v-\theta\epsilon, v+\theta\epsilon]$. Thus, 
\begin{align*}
\left|uv-\hat{u}\hat{v}\right| 
&\leq \max \left(|uv-(u+(1-\theta)\epsilon)(v+\theta\epsilon)|, |uv-(u-(1-\theta)\epsilon)(v-\theta\epsilon)|\right)
\qquad\  \text{(because $v,\hat{v}\geq 0$)}
\\& 
= \max (\left|\theta u\epsilon+ (1-\theta)v\epsilon+(1-\theta)\theta\epsilon^2\right|,
\left|\theta u \epsilon+ (1-\theta)v\epsilon-(1-\theta)\theta\epsilon^2\right|)
\\& 
=\left|\theta u \epsilon+ (1-\theta)v\epsilon+(1-\theta)\theta\epsilon^2\right| \qquad\qquad\qquad\qquad\qquad\qquad\qquad
\text{ (because} \; (1-\theta)\theta\epsilon^2 > 0 \text{)}
\\&
\leq \left|\theta(u+\epsilon)\epsilon+(1-\theta)v\epsilon\right| \qquad\qquad\qquad\qquad\qquad\qquad\qquad\qquad\;\, 
\text{ (because} \; \theta\epsilon^2 > (1-\theta)\theta\epsilon^2 \text{)}
\\&
\leq \epsilon \qquad\qquad\qquad\qquad\qquad\qquad\qquad\qquad\qquad\qquad\qquad \! \text{ (because}\; u+\epsilon\in[-1,1], \;\text{and}\; v\leq 1 \text{).}
\end{align*}
\end{proof}

We can apply Lemma \ref{lemma1} directly to the terms in \eqref{eqn:B11} by setting 
\begin{align*}
    u_z &=\frac{p_{11}^z}{\sum \limits_y p_{y1}^z}- \frac{p_{10}^z}{\sum \limits_y p_{y0}^z}, \\
    \hat{u}_z &=\frac{\hat{p}_{11}^z}{\sum \limits_y \hat{p}_{y1}^z}- \frac{\hat{p}_{10}^z}{\sum \limits_y \hat{p}_{y0}^z}, \\
    v_z &= \sum \limits_{y,t} p_{yt}^z, \\
    \hat{v}_z &= \sum \limits_{y,t} \hat{p}_{yt}^z,
\end{align*}
and noting that 
$u_z, \hat u_z\in [-1,1]$, and $v_z,\hat v_z\in [0,1]$. 
Lemma~\ref{lemma1} implies that the upper bound in \eqref{eqn:B11} holds if, for some $\theta \in (0,1)$, we have
$$\left|v_z-\hat{v}_z\right|
< \frac{\theta}{k} \epsilon \;\; \text{ and } \;\; |u_z-\hat{u}_z| < \frac{1-\theta}{k}\epsilon. $$ 

While we can apply standard concentration results to the $|v_z-\hat v_z|$ terms, 
the $|u_z-\hat u_z|$ terms will need to be further decomposed:
\begin{align*}
    |u_z-\hat u_z| 
    &= \left|\frac{p_{11}^z}{\sum \limits_y p_{y1}^z}- \frac{p_{10}^z}{\sum \limits_y p_{y0}^z} -\frac{\hat{p}_{11}^z}{\sum \limits_y \hat{p}_{y1}^z} + \frac{\hat{p}_{10}^z}{\sum \limits_y \hat{p}_{y0}^z}\right| \\
    &\le \left|\frac{p_{11}^z}{\sum \limits_y p_{y1}^z} -\frac{\hat{p}_{11}^z}{\sum \limits_y \hat{p}_{y1}^z} \right| +
    \left| \frac{p_{10}^z}{\sum \limits_y p_{y0}^z} - \frac{\hat{p}_{10}^z}{\sum \limits_y \hat{p}_{y0}^z}\right|.
\end{align*}
It will suffice to show that for each $t$ and $z$,
\begin{equation}\label{eqn:B12}  
\left| \frac{p_{1t}^z}{\sum \limits_y p_{yt}^z} - \frac{\hat{p}_{1t}^z}{\sum \limits_y \hat{p}_{yt}^z}\right| < \frac{1-\theta}{2k} \epsilon. \
\end{equation}

\paragraph{Step 3:}To bound these terms,
we derive Lemma~\ref{lemma2}. 
Recall that 
$p_{1t}^z+p_{0t}^z, \hat{p}_{1t}^z + \hat{p}_{0t}^z \in (0,1)$. 
\begin{lemma}\label{lemma2}
For any $w+s, \hat{w} + \hat{s} \in (0,1)$, if $|w+s-\hat{w} -\hat{s}|\leq (w+s) \epsilon$ and $\left|w-\hat{w}\right|\leq (w+s)\epsilon$,
then $$\left|\frac{w}{w+s} - \frac{\hat{w}}{\hat{w}+\hat{s}}\right|\leq 2\epsilon.$$
\end{lemma}
\begin{proof}[Proof of Lemma~\ref{lemma2}]
First, since $\left|w+s-\hat{w} -\hat{s}\right|\leq (w+s) \epsilon$, we have that $$\left|\frac{w+s}{\hat{w} +\hat{s}}-1\right|\leq \frac{w+s}{\hat{w} +\hat{s}} \epsilon,$$ or equivalently, 
$$ 1-\frac{w+s}{\hat{w} +\hat{s}}\epsilon\leq \frac{w+s}{\hat{w} +\hat{s}}\leq 1+\frac{w+s}{\hat{w} +\hat{s}}\epsilon.$$
We can apply this inequality and rearrange terms as follows to conclude the proof:
\begin{align*}
\left|\frac{w}{w+s} - \frac{\hat{w}}{\hat{w} + \hat{s}}\right| & 
=\left|\frac{1}{w+s}\right|\left|w -\hat{w} \frac{w+s}{\hat{w} + \hat{s}}\right|
\\&
\leq \left|\frac{1}{w+s}\right|
\max\left(
\left|w -\hat{w}\left(1-\frac{w+s}{\hat{w} + \hat{s}}\epsilon\right)\right|
, 
\left|w -\hat{w}\left(1+\frac{w+s}{\hat{w} + \hat{s}}\epsilon\right)\right|\right)
\\&
=\left|\frac{1}{w+s}\right|
\max \left(
\left|w -\hat{w}+\frac{w+s}{\hat{w} + \hat{s}}\hat{w}\epsilon\right|,
\left|w -\hat{w} - \frac{w + s}{\hat{w} + \hat{s}}\hat{w}\epsilon\right|
\right)
\\&
=\max \left(
\left|\frac{w -\hat{w}}{w+s}+\frac{\hat{w}}{\hat{w} + \hat{s}}\epsilon\right|,
\left|\frac{w -\hat{w}}{w+s} - \frac{\hat{w}}{\hat{w} + \hat{s}}\epsilon\right|
\right)
\\&
\leq 
\left|\frac{w -\hat{w}}{w+s}\right|
+\left|\frac{\hat{w}}{\hat{w} + \hat{s}}\right|\epsilon \\&\leq\left|\frac{w+s }{w+s}\right|\epsilon
+\left|\frac{\hat{w}}{\hat{w} + \hat{s}}\right|\epsilon 
\\&
\leq 2\epsilon. 
\end{align*}
The second to last inequality follows from the assumption that $|w-\hat{w}|\leq (w+s)\epsilon$.
\end{proof}
Lemma~\ref{lemma2} implies that \eqref{eqn:B12} is satisfied if
\[
\left|p_{1t}^z - \hat{p}_{1t}^z \right|<\frac{(\sum_y p_{yt}^z)(1-\theta)}{4k}\epsilon \;\; \text{ and } \;\; 
\left|p_{1t}^z + p_{0t}^z  - \hat{p}_{1t}^z -  \hat{p}_{0t}^z \right|<\frac{(\sum_y p_{yt}^z)(1-\theta)}{4k}\epsilon.\]

\paragraph{Step 4:}We've shown above that 
$|\widehat{\ATE}-\ATE|\leq \epsilon$ is satisfied when 
$$\left|v_z - \hat{v}_z\right|<\frac{\theta}{k}\epsilon, \quad \left|p_{1t}^z - \hat{p}_{1t}^z \right|<\frac{(\sum_y p_{yt}^z)(1-\theta)}{4k}\epsilon,$$ 
and 
$$\left|p_{1t}^z + p_{0t}^z  - \hat{p}_{1t}^z -  \hat{p}_{0t}^z \right|<\frac{(\sum_y p_{yt}^z)(1-\theta)}{4k}\epsilon, \forall t,z.$$
Note that if $\forall t, \left|p_{1t}^z + p_{0t}^z  - \hat{p}_{1t}^z -  \hat{p}_{0t}^z \right| = \left| \sum_{y} p_{yt}^z - \sum_{y} \hat{p} _{yt}^z \right|<\frac{(\sum_y p_{yt}^z)(1-\theta)}{4k}\epsilon$ 
then
$$\left|v_z-\hat{v}_z\right| = \left|\sum_{y,t} p_{yt}^z - \sum_{y,t} \hat{p} _{yt}^z\right| \leq \sum_t \left|\sum_{y} p_{yt}^z - \sum_{y} \hat{p} _{yt}^z\right|
<\frac{(\sum_{y,t} p_{yt}^z)(1-\theta)}{4k}\epsilon \leq \frac{(1-\theta)}{4k}\epsilon.$$ 
Thus, to 
remove the first constraint $\left|v_z-\hat{v}_z\right|<\frac{\theta}{k}\epsilon$,
we set $$\frac{\theta}{k}\epsilon = \frac{(1-\theta)}{4k}\epsilon,$$ and obtain $\theta = \frac{1}{5}$.

\paragraph{Step 5:}To summarize so far, Lemmas \ref{lemma1} and \ref{lemma2} allow us to upper bound the error of our estimated $\ATE$ in terms of upper bounds on the error of our estimates of its constituent terms:
$$
P\left(|\widehat{\ATE}-\ATE|<\epsilon \right)
\geq 
P\left(\bigcap_{t,z} \left\{\left|
p_{1t}^z - \hat{p}_{1t}^z
\right| < \frac{\sum_y p_{yt}^z}{5k}\epsilon \right\} \bigcap_{t,z}
\left\{\left|p_{1t}^z + p_{0t}^z  - \hat{p}_{1t}^z -  \hat{p}_{0t}^z \right|<\frac{\sum_y p_{yt}^z}{5k}\epsilon
\right\}\right),
$$
or equivalently,
$$
P\left(|\widehat{\ATE}-\ATE|\geq \epsilon \right)
\leq 
P\left(\bigcup_{t,z} \left\{\left|
p_{1t}^z - \hat{p}_{1t}^z
\right| \geq \frac{\sum_y p_{yt}^z}{5k}\epsilon \right\} \bigcup_{t,z}
\left\{\left|p_{1t}^z + p_{0t}^z  - \hat{p}_{1t}^z -  \hat{p}_{0t}^z \right| \geq \frac{\sum_y p_{yt}^z}{5k}\epsilon
\right\}\right).
$$
Applying a union bound, we have
\begin{equation} \label{eqnProof1a}
P\left(|\widehat{\ATE}-\ATE|\geq \epsilon \right)
\leq 
\sum _{t,z}
P\left(\left|
p_{1t}^z - \hat{p}_{1t}^z
\right| \geq \frac{\sum_y p_{yt}^z}{5k}\epsilon \right)
+ P\left(\left|p_{1t}^z + p_{0t}^z  - \hat{p}_{1t}^z -  \hat{p}_{0t}^z \right| \geq \frac{\sum_y p_{yt}^z}{5k}\epsilon
\right).
\end{equation}
\paragraph{Step 6:}Finally, we can apply Hoeffding's inequality (Theorem~\ref{hoeffding}) to obtain the upper bound for the inequality above.
Let $X_{yt}^z$ be the random variable that maps the event $(Y=y, T=t, Z=z)\mapsto \{0,1\}$. Then, $X_{yt}^z$ is a Bernoulli random variable with parameter $p_{yt}^z$. Let $m$ denote the total number of deconfounded samples that we have. Since $\hat{p}_{yt}$ is estimated through the MLE, we have $\hat{p}_{yt}^z = \frac{\sum_{i=1}^m X_{yt}^z}{m}$. Applying Theorem~\ref{hoeffding}, we obtain:
\begin{equation} \label{eqnProof1b}
P\left(\left|\frac{\sum_{i=1}^m X_{yt}^z}{m} - p_{yt}^z\right|\geq \frac{\sum_y p_{yt}^z}{5k}\epsilon\right) \leq 2\exp\left(-2m\frac{\left(\sum_y p_{yt}^z\right)^2\epsilon^2}{25k^2}\right),\;\;\text{and}\end{equation}
\begin{equation} \label{eqnProof1c}
P\left(\left|\frac{\sum_{i=1}^m X_{1t}^z + X_{0t}^z}{m}  - p_{1t}^z - p_{0t}^z\right|\geq \frac{\sum_y p_{yt}^z}{5k}\epsilon\right) \leq 2\exp\left(-2m\frac{\left(\sum_y p_{yt}^z\right)^2\epsilon^2}{25k^2}\right).
\end{equation}
Combining \eqref{eqnProof1a}, \eqref{eqnProof1b}, and \eqref{eqnProof1c}, we have
\begin{align*}
P\left(|\widehat{\ATE}-\ATE|\geq \epsilon \right)
&\leq 
\sum _{t,z}
P\left(\left|
p_{1t}^z - \hat{p}_{1t}^z
\right| \geq \frac{\sum_y p_{yt}^z}{5k}\epsilon \right)
+ P\left(\left|p_{1t}^z + p_{0t}^z  - \hat{p}_{1t}^z -  \hat{p}_{0t}^z \right| \geq \frac{\sum_y p_{yt}^z}{5k}\epsilon\right)
\\&
\leq 4k\max_{t,z}\left(2\exp\left(-2m\frac{\left(\sum_y p_{yt}^z\right)^2\epsilon^2}{25k^2}\right)\right)
\\&
 = 8k\max_{t,z}\exp\left(-2m\frac{\left(\sum_y p_{yt}^z\right)^2\epsilon^2}{25k^2}\right) \\& \leq \delta,
\end{align*}
where the second line follows from the fact that, since $t$ is binary, there are $4k$ terms in total.
Solving the above equation, we conclude that $P(|\widehat{\ATE}-\ATE|\geq \epsilon)<\delta$ 
is satisfied when the sample size $m$ is at least 
$$m\geq  \frac{12.5k^2\ln(\frac{8k}{\delta})}{\epsilon^2} \max_{t,z} \frac{1}{\left(\sum_y p_{yt}^z\right)^2}.
%
$$
\end{proof}

\subsection{Proof of Proposition~\ref{prop:hardness}}\label{proof:propHardness}

\propHardness*

\begin{proof}[Proof of Proposition~\ref{prop:hardness}]

It suffices to show for the case where confounder takes binary value. The extension to categorical confounder is straightforward as illustrated in the proof of Theorem~\ref{thm:lower_bound} in Appendix~\ref{proof:lower_bound}. Let $q_{yt} = P(Z=1|Y=y,T=t)$.
To show that Proposition~\ref{prop:hardness}  is true, it is sufficient to show that there exist a positive constant $c$ (that depends on $\bf a$) such that for all fixed $\bf a$, there exists a pair of $\bf q$ and ${\bf q}'$ such that $\|\ATE_{\bf a}({\bf q}) - \ATE_{\bf a}({\bf q}')\|>c$, with $\bf q$ and ${\bf q}'$ close in distribution. We proceed by construction.
For fixed $\bf a$, consider the following $\bf q$ pairs: ${\bf q} = (q_{00}, 0, q_{10}, \gamma)$ and ${\bf q}' = (q_{00}, \gamma, q_{10}, 0)$. Then, we have
\begin{align*}
&\ATE_{\bf a}({\bf q}) = 
(a_{00}q_{00} + a_{10}q_{10} + a_{11}\gamma) 
+ \frac{a_{11}(1-\gamma)}{a_{11}(1-\gamma) + a_{01}}(1- a_{00}q_{00} - a_{10}q_{10} 
- a_{11}\gamma) -
\\&\frac{a_{10}q_{10}}{a_{10}q_{10}+a_{00}q_{00}}(a_{00}q_{00} + a_{10}q_{10} + a_{11}\gamma) - \frac{a_{10}(1-q_{10})}{a_{10}(1-q_{10})+a_{00}(1-q_{00})}(1- a_{00}q_{00} 
- a_{10}q_{10} - a_{11}\gamma),
\end{align*}
and similarly, we have
\begin{align*}
&\ATE_{\bf a}({\bf q}') =
\frac{a_{11}}{a_{11} + a_{01}(1-\gamma)}(1- a_{00}q_{00} - a_{01}\gamma- a_{10}q_{10} 
) -
\frac{a_{10}q_{10}}{a_{10}q_{10}+a_{00}q_{00}}(a_{00}q_{00} 
\\&
+ a_{01}\gamma + a_{10}q_{10} 
) - 
\frac{a_{10}(1-q_{10})}{a_{10}(1-q_{10})+a_{00}(1-q_{00})}
(1- a_{00}q_{00} - a_{01}\gamma
- a_{10}q_{10} ).
\end{align*}
In particular, 
\begin{equation}\label{eq:hardness}
\lim_{\gamma\rightarrow 0}\ATE_{\bf a}({\bf q}) - \ATE_{\bf a}({\bf q}') = a_{00}q_{00} + a_{10}q_{10}\leq a_{00}+a_{10},
\end{equation}
where we can choose $q_{00}$ and $q_{10}$ to be $1$.

On the other hand, we can show that the number of samples needed to distinguish $\bf q$ from ${\bf q}'$ is at least $\Omega(1/\gamma)$: since ${\bf q}$ and ${\bf q}'$ are the same in two of the entries and symmetric on the rest two, to distinguish  ${\bf q}$ and ${\bf q}'$ is to distinguish a Bernoulli random variable with parameter $0$ (denoting this variable $B_0$) from a Bernoulli random variable with parameter $\gamma$ (denoting this random variable $B_{\gamma}$). Let $f$ be any estimator of the Bernoulli random variable, and $x_i, ..., x_m$ be the sequence of $m$ observations. Then we have $|\mathbb{E}_{X \sim B_0^m}[f] - \mathbb{E}_{X \sim B_{\gamma}^m}[f]|\leq \|B_0^m - B_{\gamma}^m\|_1\leq \sqrt{2(\ln 2) \mathrm{KL}(B_0^m\|B_{\gamma}^m)}\leq 2 \sqrt{(\ln 2)\gamma m}$, where the last inequality is because when given $m$ samples,  $\mathrm{KL}(B_0^m\|B_{\gamma}^m)\leq (2\gamma\ln 2 + (1-2\gamma)\ln\frac{1-2\gamma}{1-\gamma})m \leq 2\gamma m.$ On the other hand, any hypothesis test that takes n samples and distinguishes between $H_0: X_1, ... , X_n \sim P_0$ and
$H_1 : X_1, . . . , X_n \sim P_1$ has probability of error lower bounded by $\max(P_0(1), P1(0)) \geq
\frac{1}{4}e^{-n\mathrm{KL}(P_0\|P_1)}$, where $P_0(1)$ indicates the probability that we identify class $H_0$ while the true class is $H_1$. Since $P_0(1) + P_1(0) \leq \delta$, by contradiction, we can show that $m\sim\Omega(\ln(\delta^{-1})\gamma^{-1})$.

Note that this lower bound on m can be arbitrarily large by choosing $\gamma$ to be sufficiently small.
However their ATE values stay constant away as observed in Equation (\ref{eq:hardness}). Thus, for every fixed confounded distribution encoded by $\bf a$ and fixed number of deconfounded samples $m$, we can always construct a pair of conditional distributions encoded by $\bf q$ and ${\bf q}'$ such that their corresponding ATEs are constant away while the probability that we correctly identify the true conditional distribution from $\bf q$ and ${\bf q}'$ is less than $1-\delta$. In particular, $\epsilon = c = a_{00}+a_{10}$ in the above example. (Here, we implicitly assume that $a_{00}+a_{10}$ is strictly greater than zero, i.e., $a_{00}+a_{10}>0$.)
\end{proof}

\subsection{Proof of Theorem~\ref{thm:general_lower}}
\label{proof:genera_lower}
\thmLower*

\begin{proof}[Proof of Theorem~\ref{thm:general_lower}]
Again, it suffices to show for the case where the confounder is binary. The extension to categorical confounder is straightforward as illustrated in the proof of Theorem~\ref{thm:lower_bound} in Appendix~\ref{proof:lower_bound}. Let $q_{yt} = P(Z=1|Y=y, T=t)$.
We will proceed by construction. Consider ${\bf q} = (q_{00}, q_{01}, \beta, \beta+\gamma)$ and ${\bf q}' = (q_{00}, q_{01}, \beta+ \gamma, \beta )$, for some small $\gamma$. Then
\begin{align*}
&\ATE_{\bf a}({\bf q}) = 
\frac{a_{11}(\beta+\gamma)}{a_{11}(\beta+\gamma) + a_{01}q_{01}}
(a_{00}q_{00} + a_{01}q_{01} + a_{10}\beta+ a_{11}(\beta+\gamma)) 
+ \frac{a_{11}(1-\beta-\gamma)}{a_{11}(1-\beta-\gamma) + a_{01}(1-q_{01})}
\\&(1- 
a_{00}q_{00} -
a_{01}q_{01} - a_{10}\beta 
- a_{11}(\beta+\gamma)) 
-
\frac{a_{10}\beta}{a_{10}\beta+a_{00}q_{00}}(a_{00}q_{00} + a_{01}q_{01} + a_{10}\beta+ a_{11}(\beta+\gamma))
-
\\& \frac{a_{10}(1-\beta)}{a_{10}(1-\beta)+a_{00}(1-q_{00})}(1- a_{00}q_{00} - a_{01}q_{01} - a_{10}\beta 
- a_{11}(\beta+\gamma)) ,
\end{align*}
and similarly, we have
\begin{align*}
&\ATE_{\bf a}({\bf q}') =
\frac{a_{11}\beta}{a_{11}\beta + a_{01}q_{01}}
(a_{00}q_{00} + a_{01}q_{01} + a_{10}(\beta+\gamma)+ a_{11}\beta) +
\frac{a_{11}(1-\beta)}{a_{11}(1-\beta) + a_{01}(1-q_{01})}
(1- a_{00}q_{00} -
\\&
a_{01}q_{01} - a_{10}(\beta+\gamma)
- a_{11}\beta 
) -
\frac{a_{10}(\beta+\gamma)}{a_{10}(\beta+\gamma)+a_{00}q_{00}}(a_{00}q_{00} + a_{01}q_{01} + a_{10}(\beta+\gamma)+ a_{11}\beta) - 
\\&
\frac{a_{10}(1-\beta-\gamma)}{a_{10}(1-\beta-\gamma)+a_{00}(1-q_{00})}(1- a_{00}q_{00} - a_{01}q_{01} - a_{10}(\beta+\gamma)
- a_{11}\beta
).
\end{align*}
Ignoring the $\gamma$ in the denominator, we have that 
\begin{align}
&\ATE_{\bf a}({\bf q})-\ATE_{\bf a}({\bf q}') =
(\frac{a_{11}}{a_{11}\beta+a_{01}q_{01}}+\frac{a_{10}}{a_{10}\beta+a_{00}q_{00}})(a_{00}q_{00}+a_{01}q_{01}+a_{10}\beta+a_{11}\beta)\gamma \nonumber
\\& 
-(\frac{a_{11}}{a_{11}(1-\beta)+a_{01}(1-q_{01})}+\frac{a_{10}}{a_{10}(1-\beta)+a_{00}(1-q_{00})})(1-a_{00}q_{00}-a_{01}q_{01}-a_{10}\beta-a_{11}\beta)\gamma \nonumber
\\& 
+\frac{a_{11}^2-a_{11}a_{10}}{a_{11}\beta+a_{01}q_{01}}\beta\gamma - \frac{a_{11}^2-a_{11}a_{10}}{a_{11}(1-\beta)+a_{01}(1-q_{01})}(1-\beta)\gamma \nonumber
\\&
+\frac{a_{10}^2-a_{11}a_{10}}{a_{10}\beta+a_{00}q_{00}}\beta\gamma - \frac{a_{10}^2-a_{11}a_{10}}{a_{10}(1-\beta)+a_{00}(1-q_{00})}(1-\beta)\gamma \nonumber
\\&
+ \frac{a_{11}^2}{a_{11}\beta+a_{01}q_{01}}\gamma^2 + \frac{a_{11}^2}{a_{11}(1-\beta)+a_{01}(1-q_{01})}\gamma^2
+ \frac{a_{10}^2}{a_{10}\beta+a_{00}q_{00}}\gamma^2 + \frac{a_{10}^2}{a_{10}(1-\beta)+a_{00}(1-q_{00})}\gamma^2 \label{eq:ATE_diff}
\end{align}

Similar to the proof above, let $B_1$ denote the Bernoulli random variable with parameter $\beta$, and let $B_2$ denote the Bernoulli random variable with parameter $\beta+\gamma$. Then, given $m$ deconfounded samples, we have $\mathrm{KL}(B_1^m\|B_2^m) \leq m\beta\ln(\frac{\beta}{\beta+\gamma}) + m(1-\beta)\ln(\frac{1-\beta}{1-\beta-\gamma})\leq m\ln(1+\frac{\gamma}{1-\beta-\gamma}) \leq m(\frac{\gamma}{1-\beta-\gamma} - \frac{\gamma^2}{2(1-\beta-\gamma)^2}) $. Thus, we have $m\sim \Omega(\frac{\ln(\delta^{-1})}{\gamma^2})$. From Equation (\ref{eq:ATE_diff}), we observe that $\epsilon = \|\ATE_{\bf a}({\bf q}) - \ATE_{\bf a}({\bf q}')\| \sim \Omega (\gamma)$. Combining above, we have $m\sim \Omega(\frac{\ln(\delta^{-1})}{\epsilon^2})$.

\end{proof}

\subsection{Proof of Theorems~\ref{thm_m1} and \ref{thm_m2}}\label{proof:thm_m2}

\thmNSP*
\thmUSPOWSP*

\begin{proof}[Proof of Theorems~\ref{thm_m1} and \ref{thm_m2}]
In these theorems, we derive the concentration of the $\widehat{\ATE}$ assuming infinite confounded data, and parametrize $p_{yt}^z$ by $p_{yt}^z = a_{yt}q_{yt}^z$.  Since under infinite confounded data, $a_{yt}$'s are known, and thus we only need to estimate the $q_{yt}^z$'s. 
The key difference between Theorem~\ref{thm_m2} and Theorem~\ref{thm_m0} is that now 
we define the random variables $X_{yt}^z$ to map the event $(Z=z|Y=y,T=t)$ to $\{0,1\}$. Thus, $X_{yt}^z$ is distributed according to Bernoulli$(q_{yt}^z)$. 
Thus, to decompose $\left|a_{1t}q_{1t}^z + a_{0t}q_{0t}^z  - a_{1t}\hat{q}_{1t}^z -  a_{0t}\hat{q}_{0t}^z \right|$,
we first show the following lemma:
\begin{lemma}\label{cor4}
Let $X_1, ..., X_{x_1 m}$ and $Y_1, ..., Y_{x_2 m}$ be independent random variables in [0,1]. Then for any $t>0$, we have 
$$P\left(\left|\alpha\frac{\sum_{i=1}^{x_1 m} X_i-\mathbb{E}\left[X_i\right]}{x_1 m} + \beta\frac{\sum_{j=1}^{x_2 m} Y_j-\mathbb{E}\left[Y_j\right]}{x_2 m}\right|
\geq \alpha t+\beta k\right)\leq 
2\exp\left(-\frac{2m (\alpha t+\beta k)^2}{\left(\frac{\alpha^2}{x_1}+\frac{\beta^2}{x_2}\right)}\right).$$
\end{lemma}
\begin{proof}[Proof of Lemma \ref{cor4}]
First observe that 
\begin{align*}
P & \left(\alpha\frac{\sum_{i=1}^{x_1 m} X_i-\mathbb{E}\left[X_i\right]}{x_1 m} + \beta\frac{\sum_{j=1}^{x_2 m} Y_j-\mathbb{E}\left[Y_j\right]}{x_2 m}\geq \alpha t+\beta k\right)
\\
&=P\bigg(\frac{\alpha}{x_1}\sum \limits_{i=1}^{x_1 m} (X_i-\mathbb{E}\left[X_i\right])+ \frac{\beta}{x_2}\sum \limits_{j=1}^{x_2 m} (Y_j-\mathbb{E}\left[Y_j\right])\geq m\alpha t+m\beta k\bigg).
\end{align*}
Now, let $Z_i=\frac{\alpha}{x_1}X_i$ if $i\in [1,x_1 m]$, and $Z_i=\frac{\beta}{x_2}Y_i$ if $i\in [x_1 m +1, (x_1 + x_2)m]$. Then applying Theorem~\ref{hoeffding}, we have
\begin{align*}
   P\left(\left|\sum_{i=1}^{(x_1+x_2)m} \left(Z_i - \mathbb{E}[Z_i]\right)\right|\geq m\alpha t + m\beta k \right) 
   & \leq 2\exp\left(-\frac{2m^2(\alpha t + \beta k)^2}{\sum_{i=1}^{(x_1+x_2)m} (M_i-m_i)^2}\right) \\
& = 2\exp\left(-\frac{2m(\alpha t + \beta k)^2}{ \frac{\alpha^2}{x_1}+\frac{\beta^2}{x_2}}\right). 
\end{align*}
\end{proof}
As defined in Section~\ref{sec:methods}, let $x_{yt}$ denote the percentage data we sample from the group $yt$.

Recall that from the proof of Theorem~\ref{thm_m0}, we have
\begin{align*}
&P\left(|\widehat{\ATE}-\ATE|\geq \epsilon \right)
\leq 
\sum _{t,z}
P\left( \left|
p_{1t}^z - \hat{p}_{1t}^z
\right| \geq \frac{\sum_y p_{yt}^z}{5k}\epsilon \right)
+ P\left(\left|p_{1t}^z + p_{0t}^z  - \hat{p}_{1t}^z -  \hat{p}_{0t}^z \right| \geq \frac{\sum_y p_{yt}^z}{5k}\epsilon \right)    
\\&
= \sum _{t,z}
P\left(\left|
a_{1t}q_{1t}^z - a_{1t}\hat{q}_{1t}^z
\right| \geq \frac{\sum_y a_{yt}q_{yt}^z}{5k}\epsilon \right)
+ P\left(\left|a_{1t}q_{1t}^z + a_{0t}q_{0t}^z  - a_{1t}\hat{q}_{1t}^z -  a_{0t}\hat{q}_{0t}^z \right| \geq \frac{\sum_y a_{yt}q_{yt}^z}{5k}\epsilon\right)
\\&
= \sum _{t,z}
P\left(\left|
q_{1t}^z - \hat{q}_{1t}^z
\right| \geq \frac{\sum_y a_{yt}q_{yt}^z}{5k a_{1t}}\epsilon \right)
+ P\left(\left|a_{1t}q_{1t}^z + a_{0t}q_{0t}^z  - a_{1t}\hat{q}_{1t}^z -  a_{0t}\hat{q}_{0t}^z \right| \geq \frac{\sum_y a_{yt}q_{yt}^z}{5k}\epsilon\right)
\\& 
\leq 4k\max_{t,z}\left( 
2\exp\left(
-2x_{1t}m\frac{\left(\sum_y a_{yt}q_{yt}^z\right)^2\epsilon^2}{25k^2a_{1t}^2}
\right),
2\exp\left(-2m \frac{\left(\sum_y a_{yt}q_{yt}^z\right)^2\epsilon^2}{25k^2\sum_y \frac{a_{yt}^2}{x_{yt}}} \right)
\right) \\ & \leq \delta,
\end{align*}
where the second to last line follows from applying Lemma~\ref{cor4} to the second half of the line above it.
Solving the equation above, we have
\begin{align*}
m &\geq  \frac{12.5k^2\ln(\frac{8k}{\delta})}{\epsilon^2} \max_{t,z} 
\left( \frac{a_{1t}^2/x_{1t}}{\left(\sum_y a_{yt}q_{yt}^z\right)^2}, 
\frac{\sum_y \left(a_{yt}^2/x_{yt}\right)}{\left(\sum_y a_{yt}q_{yt}^z\right)^2}
\right)
=\frac{12.5k^2\ln(\frac{8k}{\delta})}{\epsilon^2} \max_{t,z} \frac{\sum_y \left(a_{yt}^2/x_{yt}\right)}{\left(\sum_y a_{yt}q_{yt}^z\right)^2}.
\end{align*}
The last equality is because $a_2^2/x_2, a_1^2/x_1>0$. 
Under NSP, $x_{yt} = a_{yt}$. Thus, we have
$$
m_{\mathrm{nsp}}:= \frac{12.5k^2\ln(\frac{8k}{\delta})}{\epsilon^2} \max_{t,z} \frac{\sum_y a_{yt}}{\left(\sum_y a_{yt}q_{yt}^z\right)^2}.
$$
Similarly, under USP, $x_{yt}=\frac{1}{4}$, and we have
$$
m_{\mathrm{usp}}:= \frac{12.5k^2\ln(\frac{8k}{\delta})}{\epsilon^2} \max_{t,z} \frac{\sum_y 4a_{yt}^2}{\left(\sum_y a_{yt}q_{yt}^z\right)^2}.
$$
Lastly, under OWSP, $x_{yt} = \frac{a_{yt}}{2\sum_y a_{yt}}$, and we have 
$$
m_{\mathrm{owsp}}:= \frac{12.5k^2\ln(\frac{8k}{\delta})}{\epsilon^2} \max_{t,z} \frac{2(\sum_y a_{yt})^2}{\left(\sum_y a_{yt}q_{yt}^z\right)^2}.
$$

\end{proof}

\subsection{Proof of Theorem~\ref{cor}}\label{proof:cor}

\corUpper*

\begin{proof}[Proof of Theorem~\ref{cor}]
We proceed by construction. 
For simplicity, we illustrate the correctness of Theorem~\ref{cor}
for binary confounders. The extension to the multi-valued confounder is straightforward and will be demonstrated in the proof of Theorem~\ref{thm:lower_bound}.

Consider the following example: 
$ a_{01} = a_{10} = a_{11} = \eta$,  $a_{00} = 1 - 3\eta$,
and consider the following pair of $\bf q$'s: ${\bf q} = (\beta, \beta, \beta, c\beta)$ and ${\bf q}' = (\beta, \beta, \beta, \beta)$, where $c\leq \frac{1-\beta}{\beta}$ is some constant. 
Here, one of the $\bf q$ and ${\bf q}'$ represents the true ATE, and the other represents the estimated ATE using the best estimator. Without loss of generality, we assume that we have already identified three components of the true conditional distribution. (In general, we can always construct an instance by modifying the values of $a_{01}$ and $a_{10}$ so that the majority error is induced by estimation error on $q_{11}$.)
Then, we have $\ATE_{\bf a}({\bf q}) = \frac{c\beta}{1+c} + \frac{(1-c\beta)(1-\beta)}{2-c\beta-\beta} - \frac{\eta}{1-2\eta}$, and $\ATE_{\bf a}({\bf q}') = \frac{1}{2} - \frac{\eta}{1-2\eta}$. Thus,  
$\Delta \ATE := |\ATE_{\bf a}({\bf q}) - \ATE_{\bf a}({\bf q}') |$:
\begin{align*}
    \Delta \ATE
    &=\left|
    \frac{1}{2} -
    \frac{c\beta}{c+1} - \frac{(1-c\beta)(1-\beta)}{2-c\beta-\beta}
    \right|.
\end{align*}
Note that when $c = \frac{1-\beta}{\beta}$, $\Delta \ATE = 0.5 - 2\beta(1-\beta)\approx 0.5$. Thus, for any $\epsilon \in [0,0.5-2\beta(1-\beta)]$, there exists some $c$ such that
$\epsilon = \Delta\ATE$. Then, for any $\delta$,
let $\mu$ denote the minimum expected number of
samples that we need to
distinguish $\bf q$ from $\bf q'$ under the best estimator. Then under NSP, the minimum number of samples that we need under the best estimator equals to $\mu_{\mathrm{nsp}} := \mu/\eta$, and under OWSP, the minimum number of samples that we need under the best estimator equals to $\mu_{\mathrm{oswp}} = 4\mu$. (Note that ${\bf x}_{yt} = (\frac{1-3\eta}{2(1-2\eta)}, \frac{1}{4}, \frac{\eta}{2(1-2\eta)}, \frac{1}{4})$ under OWSP in this example.)
Thus, $\mu_{\mathrm{owsp}}/\mu_{\mathrm{nsp}} = 4\eta$. Since in this example, $\eta$ is at most $1/4$, $\mu_{\mathrm{owsp}}/\mu_{\mathrm{nsp}} \leq 1$ and can be arbitrarily close to $0$ as $\eta\rightarrow 0$.  
(Intuitively, the first statement is true because when $\sum_t a_{0t}\ll \sum_t a_{1t}$ and $a_{00}\approx a_{01}$, it is equally important to estimate $q_{0t}^z$'s and $q_{1t}^z$'s according to the ATE expression. However, under this setup, the number of samples allocated to groups $(0,t)$'s decreases as $a_{0,t}$'s approach to 0 under NSP, while under OWSP, half of the deconfounded samples are always dedicated to estimate the $q_{0t}^z$'s.)

Next, we show the last sentence in Theorem~\ref{cor} is true. For any fixed $\epsilon, \delta<1$, let $\mu_{\mathrm{nsp}}$ be the minimum expected number of samples needed to achieve $P(|\widehat\ATE-\ATE|\geq\epsilon)<\delta$ under natural selection policy for the best estimator, then when $w_{\mathrm{owsp}}:= 2\mu_{\mathrm{nsp}}\max_t \sum_{y} a_{yt}$ also achieves $P(|\widehat\ATE-\ATE|\geq\epsilon)<\delta$ under the outcome-weighted selection policy. The reason is that when using $w_{\mathrm{owsp}}$
number of deconfounded samples, the number of deconfounded data allocated to each $yt$ group is at least as much as those under the natural selection policy. Thus, we have $\mu_{\mathrm{owsp}}\leq w_{\mathrm{owsp}} \leq 2\mu_{\mathrm{nsp}}$, where the last inequality is because $\max_{t}\sum_y a_{yt}< 1$.

\end{proof}

\subsection{Proof of Theorem~\ref{thm:lower_bound}}\label{proof:lower_bound}
\thmLowerPolicies* 

\begin{proof}
Consider ${\bf q} = (q_{00}^z, q_{01}^z, q_{10}^z, q_{11}^z)$ where
$q_{01}^1 = \beta$,  $q_{11}^1 = \beta+\gamma$, 
and $q_{11}^z = q_{01}^z - \gamma/(k-1) $ for $z=2,...,k$, with
$\sum_z q_{01}^z = \sum_{z} q_{11}^z =1$.
We assume that $q_{11}^z, q_{01}^z \in[\beta, 1-\beta]$ for some suitable $\beta$ and $\gamma$ for all values of $Z$.
Similarly, we consider the ${\bf q}'$ where the entries of $q_{01}^z$ and $q_{11}^z$ are flipped, i.e., ${\bf q}'= (q_{00}^z,  q_{11}^z, q_{10}^z, q_{01}^z )$, for some small $\gamma$, where the $q_{yt}^z$'s are defined above. Then,
\begin{align*}
&\ATE_{\bf a}({\bf q}) =
\sum_z\left(\left(\frac{a_{11}q_{11}^z}{\sum_y a_{y1}q_{y1}^z} - \frac{a_{10}q_{10}^z}{\sum_y a_{y0}q_{y0}^z}\right)
\sum_{y,t}a_{yt}q_{yt}^z\right)
\\&
=
\frac{a_{11}(\beta+\gamma)}{a_{11}(\beta+\gamma) + a_{01}\beta}
(a_{00}q_{00}^1 + a_{01}\beta + a_{10}q_{10}^1+ a_{11}(\beta+\gamma)) -
\frac{a_{10}q_{10}^1}{a_{10}q_{10}^1+a_{00}q_{00}^1}(a_{00}q_{00}^1 + a_{01}\beta +
a_{10}q_{10}^1 +
\\& 
a_{11}(\beta+\gamma))
+ 
\sum_{z=2}^k \frac{a_{11}\left(q_{01}^z - \frac{\gamma}{k-1}\right)}{a_{11}\left(q_{01}^z - \frac{\gamma}{k-1}\right) + a_{01}q_{01}^z}
\left(a_{00}q_{00}^z + a_{01}q_{01}^z +  a_{10}q_{10}^z
+ a_{11}\left(q_{01}^z - \frac{\gamma}{k-1}\right)\right)
-
\\& \sum_{z=2}^k  \frac{a_{10}q_{10}^z}{a_{10}q_{10}^z+a_{00}q_{00}^z}\left( a_{00}q_{00}^z + a_{01}q_{01}^z + a_{10}q_{10}^z
+ a_{11}\left(q_{01}^z - \frac{\gamma}{k-1}\right)\right),
\end{align*}
and similarly, we have
\begin{align*}
&\ATE_{\bf a}({\bf q}') =
\frac{a_{11}\beta}{a_{11}\beta + a_{01}(\beta+\gamma)}
(a_{00}q_{00}^1 + a_{01}(\beta+\gamma) + a_{10}q_{10}^1+ a_{11}\beta) - \frac{a_{10}q_{10}^1}{a_{10}q_{10}^1+a_{00}q_{00}^1}(a_{00}q_{00}^1 + a_{01}(\beta+\gamma) +
\\&
a_{10}q_{10}^1+ a_{11}\beta) +
\sum_{z=2}^k \frac{a_{11}q_{01}^z}{a_{11}q_{01}^z + a_{01}\left(q_{01}^z-\frac{\gamma}{k-1}\right)}\left(a_{00}q_{00}^z + a_{01}\left(q_{01}^z-\frac{\gamma}{k-1}\right) +  a_{10}q_{10}^z
+ a_{11}q_{01}^z\right)
-
\\& \sum_{z=2}^k  \frac{a_{10}q_{10}^z}{a_{10}q_{10}^z+a_{00}q_{00}^z}\left( a_{00}q_{00}^z + a_{01}\left(q_{01}^z-\frac{\gamma}{k-1}\right) + a_{10}q_{10}^z
+ a_{11}q_{01}^z\right)
\end{align*}
Ignoring the $\gamma$ in the denominator, we have that 
\begin{align}
&\ATE_{\bf a}({\bf q})-\ATE_{\bf a}({\bf q}') 
\approx
\frac{a_{11}}{a_{11}\beta+a_{01}\beta}
(a_{00}q_{00}^1+a_{01}\beta+a_{10}q_{10}^1+a_{11}\beta)\gamma 
+\frac{a_{10}q_{10}^1(a_{01}-a_{11})}{a_{10}q_{10}^1+a_{00}q_{00}^1}\gamma 
\nonumber
\\& 
-\left(\sum_{z=2}^k\left(\frac{a_{11}/{k-1}}{a_{11}q_{01}^z+a_{01}q_{01}^z}  (a_{00}q_{00}^z+a_{10}q_{10}^z+(a_{01}+a_{11})q_{10}^z)\right)\right)\gamma 
- \sum_{z=2}^k\frac{a_{10}q_{10}^z(a_{01}-a_{11})}{a_{10}q_{10}^z+a_{00}q_{00}^z}\frac{1}{k-1}\gamma
\nonumber
\\&
+ \frac{a_{11}^2}{a_{11}\beta+a_{01}\beta}\gamma^2 +
\sum_{z=2}^k\frac{a_{11}^2}{a_{11}q_{01}^z+a_{01}q_{01}^z}\frac{\gamma^2}{(k-1)^2} \nonumber
\\&
=\frac{a_{11}}{a_{11}\beta+a_{01}\beta}
(a_{00}q_{00}^1+a_{10}q_{10}^1)\gamma 
+\frac{a_{10}q_{10}^1(a_{01}-a_{11})}{a_{10}q_{10}^1+a_{00}q_{00}^1}\gamma 
-\frac{a_{11}}{k-1} \sum_{z=2}^k\left(\frac{a_{00}q_{00}^z+a_{10}q_{10}^z}{a_{11}q_{01}^z+a_{01}q_{01}^z}  \right)\gamma
\nonumber
\\& 
- \frac{1}{k-1}\sum_{z=2}^k\frac{a_{10}q_{10}^z(a_{01}-a_{11})}{a_{10}q_{10}^z+a_{00}q_{00}^z}\gamma
+ \frac{a_{11}^2}{a_{11}\beta+a_{01}\beta}\gamma^2 +
\sum_{z=2}^k\frac{a_{11}^2}{a_{11}q_{01}^z+a_{01}q_{01}^z}\frac{\gamma^2}{(k-1)^2}
\label{eq:ATE_diff2}
\end{align}

Since the second order terms in $\gamma$ is dominated by the first order terms in $\gamma$, thus to find the highest lower bound for sample complexity in this instance is to find the largest coefficient in front of $\gamma$.

Assuming that $\beta \ll k$ and $k\beta <1$, then the maximum of Equation (\ref{eq:ATE_diff2}) is achieved when $q_{00}^z = q_{10}^z =\beta$ , $q_{00}^1 = q_{10}^1 = 1-k\beta$, and $q_{01}^z = (1-\beta)/(k-1)$, and the coefficient in front of $\gamma$ is
$$\frac{a_{11}}{a_{11}+a_{01}}(a_{00}+a_{10})(\frac{1}{\beta}-\frac{k-\beta}{1-\beta})\approx \frac{a_{11}}{a_{11}+a_{01}}(a_{00}+a_{10})\left(\frac{1}{\beta}-k\right).$$

Similar to the proof of Theorem~\ref{thm:general_lower}, 
we have $m\sim \Omega(\frac{\ln(\delta^{-1})}{\gamma^2})$. From Equation (\ref{eq:ATE_diff}), we observe that $\epsilon = \|\ATE_{\bf a}({\bf q}) - \ATE_{\bf a}({\bf q}')\| \sim \Omega (\gamma)$. Combining above, we have $m\sim \Omega(\frac{\ln(\delta^{-1})}{\epsilon^2})$.
In the case above, $\epsilon\approx \frac{a_{11}}{a_{11}+a_{01}}(a_{00}+a_{10})\frac{1}{\beta}\gamma$, thus, the number of deconfounded samples needed is approximately $$ m \propto \frac{\ln(\delta^{-1})a_{11}^2(a_{00}+a_{10})^2 }{\epsilon^2  (a_{11}+a_{01})^2}\left(\frac{1}{\beta} - k\right)^2.$$
Let $C_1 \propto (k\beta - 1)^2\ln(\delta^{-1})\epsilon^{-2}$. Then $m\sim \Omega\left(\frac{C_1}{\beta^2}\frac{a_{11}^2(a_{00}+a_{10}^2)}{(a_{11}+a_{01})^2}\right)$.

If we flip the values of $q_{01}^z$ and $q_{11}^z$ with the values of $q_{00}^z$ and $q_{10}^z$ in both $\bf q$ and ${\bf q}'$, then we have $m\sim \frac{C_1}{\beta^2} \frac{a_{10}^2(a_{01}+a_{11})^2}{ (a_{10}+a_{00})^2}$. Note that this is because that the estimation error on ATE and $1-\ATE$ is symmetric.
In addition, under natural selection policy, we need at least $\frac{m}{a_{11}}$ samples; uniform selection policy, we need at least $4m$ deconfounded samples; under outcome-weighted selection policy, we need at least $2\frac{a_{11}+a_{01}}{a_{11}}m$ deconfounded samples.
Combining all of the above, we obtained Theorem~\ref{thm:lower_bound}.

\end{proof}


\section{Finite Confounded Data}\label{app:finite}


In this case,  \emph{deconfounding} reveals the value of $Z$ for one (initially confounded) sample, 
and thus we gain no additional information about $\mathcal{P}_{Y,T}$. 
Thus,
these $n$ confounded data provide us with an \emph{estimate} 
of the confounded distribution, $ \hat{P}_{Y,T}(y,t)$, which we denote $\hat{a}_{yt}$, and thus provide us an estimated OWSP.
Similarly, we estimate $\hat{a}_{yt}$ using the MLE from the confounded data.
To check the robustness of OWSP, 
we extend our analysis 
to handle
finite confounded data.
%
%
%
%
%
%
%
With $x_{yt}$ defined as in Section~\ref{sec:infinite}, 
we can derive a theorem analogous to Theorems \ref{thm_m0}-\ref{thm_m2}:
\begin{restatable}{theorem}{thmFinite}{\red (Upper Bound)}
\label{thm5}
Given $n$ confounded and $m$ deconfounded samples, with $n\geq m$,
$P(|\ATE-\widehat{\ATE}|\geq\epsilon)\leq \delta$ is satisfied when
\begin{align}
\min_{y, t,z} 
\frac{\left(\sum_y a_{yt}q_{yt}^z\right)^2}{\frac{1}{x_{yt}m} + 
\frac{(q_{yt}^z)^2}{n}}
&=
\min \limits_{y,t,z}
\left(
{\frac{P_{T,Z}(t,z)^2}{\frac{1}{x_{yt}m}+\frac{(q_{yt}^z)^2}{n}}}
\right)
\geq
4C.\label{eq:finite}
\end{align}
\vspace{-20pt}
\end{restatable}
The proof of Theorem~\ref{thm5} (Appendix~\ref{proof:thm5}) 
requires a bound we derive (Appendix, Lemma~\ref{thm3}) for the product of two independent random variables.
A few results follow from Theorem~\ref{thm5}. First, a quick calculation
shows that when $m$ is held constant, 
$P(|\ATE-\widehat{\ATE}|\geq\epsilon)$ remains positive as $n\rightarrow \infty$. 
This means that for a certain combinations of $\epsilon, \delta, n$, there does not necessarily exist
a sufficiently large $m$ s.t. $P(|\ATE-\widehat{\ATE}|\geq\epsilon)\leq \delta$ can be satisfied.
However, when there exists such an $m$, 
then 
\vspace{-10pt}
$$
m\geq \max_{y,t,z} {x_{yt}^{-1}\left(\frac{P_{T,Z}(t,z)^2}
{4C}
- \frac{(q_{yt}^z)^2}{n}\right)^{-1}}.
\vspace{-5pt}
$$

Although Theorem~\ref{thm5} does not recover Theorems~\ref{thm_m1} and \ref{thm_m2} exactly when $n\rightarrow\infty$,\footnote{
We could apply Lemma~\ref{lemma1} (Appendix~\ref{app_proofs}) 
to obtain a bound that recovers Theorems~\ref{thm_m1} and \ref{thm_m2} exactly as $n\rightarrow\infty$. 
However, this method does not give us sufficient insights
into the comparative performance of our sampling policies.}
it provides us with insights 
into relative performance of our sampling policies.
%
%
Theorem~\ref{thm5} implies that when $n \gg (q_{yt}^z)^2 x_{yt} m \; \forall y,t,$
the majority of the estimation error comes from not deconfounding enough data. 
This is because when the number of confounded data that we have
is more than $\Omega(m)$, the error on the ATE
in Equation (\ref{eq:finite}) is dominated by fact 
that we have not deconfounded enough data.
To put it another way, for a given $m$, having $n = \Omega(m)$ confounded samples is sufficient.

\subsection{Proof of Theorem~\ref{thm5}}\label{proof:thm5}
\thmFinite*

\begin{proof}[Proof of Theorem~\ref{thm5}]
In this theorem, we derive the concentration for the $\widehat{\ATE}$ under finite confounded data. 
The difference between Theorem~\ref{thm_m2} and Theorem~\ref{thm5} is that now we need to estimate $a_{yt}$ in addition to $q_{yt}^z$. Thus, 
to decompose $|a_{yt}q_{yt}^z - \hat{a}_{yt}\hat{q}_{yt}^z|$, we first derive Lemma~\ref{thm3}.
\subsubsection{Lemma~\ref{thm3}}\label{proof:thm3}
\begin{lemma}[Sample complexity for two independent r.v.s with two independent sampling processes]\label{thm3}
Let $X_1,..., X_n$ and $Y_1, ..., Y_m$ be two sequences of Bernoulli random variables independently drawn from distribution $p_1$ and $p_2$, respectively. Let $S_X= \sum \limits_{i=1}^n X_i, S_Y =\sum \limits_{i=1}^m Y_i$. Then, 
$$P\Big(\Big|S_X S_Y - \mathbb{E}\left[S_X\right]\mathbb{E}\left[S_Y\right]\Big|\geq nmt\Big)\leq 2\exp\left(\frac{-2t^2}{\frac{1}{m}+\frac{p_2^2}{n}}\right).$$
\end{lemma}

\begin{proof}[Proof of Lemma~\ref{thm3}]
The proof follows the proof of Hoeffding's inequality:
\begin{align*}
&P\Big(S_X S_Y - \mathbb{E}[S_X]\mathbb{E}[S_Y]\geq nmt\Big)= P\big(\exp(a S_X S_Y - a \mathbb{E}[S_X]\mathbb{E}[S_Y]))\geq \exp(anmt)\big)
\numberthis \label{eq:pMI}
\\ &
\leq \exp(-anmt)\mathbb{E}\left[\exp(a S_X S_Y - a \mathbb{E}[S_X]\mathbb{E}[S_Y]))\right] \numberthis \label{eq:MI}
, \quad \quad \,\, \text{  (because of Markov's inequality)}
\\&
=\exp(-anmt)\mathbb{E}\left[\exp(a S_X (S_Y-\mathbb{E}[S_Y])+ a \mathbb{E}[S_Y](S_X- \mathbb{E}[S_X])\right]
\\&
\leq \exp(-anmt)\mathbb{E}\left[\exp(a\max(S_X) (S_Y-\mathbb{E}[S_Y]) + a \mathbb{E}[S_Y](S_X- \mathbb{E}[S_X]))\right] 
\numberthis \label{eq:sx}
 \text{ (because } S_X\geq 0\text{)}
\\&
= \exp(-anmt)\mathbb{E}\left[\exp(an (S_Y-\mathbb{E}[S_Y])+ a\mathbb{E}[S_Y](S_X- \mathbb{E}[S_X]))\right]
\\ &
= \exp\left(-anmt\right)\mathbb{E}\left[\exp\left(a n (S_Y-\mathbb{E}[S_Y])\right)\right]\mathbb{E}\left[\exp(a \mathbb{E}[S_Y](S_X- \mathbb{E}[S_X]))\right]
\numberthis \label{eq:indp}
\quad \text{ (because} X \indep Y \text{)}
\\&
= \exp(-anmt)\prod_{i=1}^{m}\prod_{j=1}^{n} \mathbb{E}\left[\exp(an(Y_i-\mathbb{E}[Y_i]))\right]
\mathbb{E}\left[\exp(a\mathbb{E}[S_Y](X_j-\mathbb{E}[X_j]))\right]
\\&
\leq 
\exp(-anmt)\prod_{i=1}^{m}\exp\left(\frac{a^2}{8}n^2\right)\prod_{j=1}^{n}\exp\left(\frac{a^2}{8}\mathbb{E}[S_Y]^2\right)
\numberthis \label{eq:hoeffd}
\\&
= \exp\left(-anmt+\frac{a^2}{8}mn^2+\frac{a^2}{8}nm^2p_2^2\right) 
\numberthis \label{eq:min}
\, \text{ (because the minimum is achieved at $a=\frac{4t}{n+mp_2^2}$)}
\\&
\leq 
\exp\left(-\frac{2mnt^2}{n+mp_2^2}\right)
=\exp\left(-\frac{2t^2}{\frac{1}{m}+\frac{p_2^2}{n}}\right).
\end{align*}
Line (\ref{eq:hoeffd}) is because 
$Y_i-\mathbb{E}[Y_i]\in \{-\mathbb{E}[Y_i],1-\mathbb{E}[Y_i])$, and thus $n(Y_i-\mathbb{E}(Y_i))\in [-n\mathbb{E}[Y_i], n(1-\mathbb{E}[Y_i])]$.
Furthermore, $\mathbb{E}[S_Y](X_i-\mathbb{E}[X_i])\in (-\mathbb{E}[X]\mathbb{E}[S_Y],(1-\mathbb{E}[X])\mathbb{E}[S_Y])$. Finally, applying Hoeffding's Lemma (Lemma~\ref{lemma:hoeffding}), we obtain line (\ref{eq:hoeffd}). 
\end{proof}

Now we are ready to prove Theorem~\ref{thm5}.
\subsubsection{Proof of Theorem~\ref{thm5}}
In this theorem, we assume that the number of confounded data is finite. Thus, instead of $a_{yt}$, we have estimates of them, namely $\hat{a}_{yt}$. 
Let $n_{yt}$ denote the number of samples in the confounded data such that $(Y=y, T=t)$. Let $m_{yt}^z$ be the number of samples in the deconfounded data such that $(Y=y,T=t,Z=z)$. Furthermore, let $n=\sum_{y,t} n_{yt}, m=\sum_{y,t,z} m_{yt}^z$. Then, under our setup, we estimate $a_{yt}$ and $q_{yt}^z$ as follows: 
$$\hat{a}_{yt} = \frac{n_{yt}}{n}, \; \text{and}\;
\hat{q}_{yt}^z = \frac{m_{yt}^z}{\sum_z m_{yt}^z}.$$

Thus, following the proof of Theorem~\ref{thm_m0}, we have
\begin{align*}
&P\left(|\widehat{\ATE}-\ATE|<\epsilon \right)
\geq 
P\left(\bigcap_{t,z} \left\{\left|
p_{1t}^z - \hat{p}_{1t}^z
\right| < \frac{\sum_y p_{yt}^z}{5k}\epsilon \right\} \bigcap_{t,z}
\left\{\left|p_{1t}^z + p_{0t}^z  - \hat{p}_{1t}^z -  \hat{p}_{0t}^z \right|<\frac{\sum_y p_{yt}^z}{5k}\epsilon
\right\}\right)
\\&
= P\left(\bigcap_{t,z} \left\{\left|
a_{1t}q_{1t}^z - \hat{a}_{1t}\hat{q}_{1t}^z
\right| < \frac{\sum_y a_{yt}q_{yt}^z}{5k}\epsilon \right\} \bigcap_{t,z}
\left\{\left|a_{1t}q_{1t}^z + a_{0t}q_{0t}^z  - \hat{a}_{1t}\hat{q}_{1t}^z -  \hat{a}_{0t}\hat{q}_{0t}^z \right|<\frac{\sum_y a_{yt}q_{yt}^z}{5k}\epsilon
\right\}\right).
\end{align*}
Notice that $\left|a_{1t}q_{1t}^z + a_{0t}q_{0t}^z  - \hat{a}_{1t}\hat{q}_{1t}^z -  \hat{a}_{0t}\hat{q}_{0t}^z \right|<\frac{\sum_y a_{yt}q_{yt}^z}{5k}\epsilon$ is satisfied when both $$\left|
a_{1t}q_{1t}^z - \hat{a}_{1t}\hat{q}_{1t}^z
\right| < \frac{\sum_y a_{yt}q_{yt}^z}{10k}\epsilon, \;\text{and}\; \left|
a_{0t}q_{0t}^z - \hat{a}_{0t}\hat{q}_{0t}^z
\right| < \frac{\sum_y a_{yt}q_{yt}^z}{10k}\epsilon.$$ We have:
\begin{align*}
P\left(|\widehat{\ATE}-\ATE|<\epsilon \right)
&\geq 
P\left(\bigcap_{t,z} \left\{\left|
a_{1t}q_{1t}^z - \hat{a}_{1t}\hat{q}_{1t}^z
\right| < \frac{\sum_y a_{yt}q_{yt}^z}{10k}\epsilon \right\} \bigcap_{t,z}
\left\{\left|
a_{0t}q_{0t}^z - \hat{a}_{0t}\hat{q}_{0t}^z
\right| < \frac{\sum_y a_{yt}q_{yt}^z}{10k}\epsilon
\right\}\right)
\\&
= P\left(\bigcap_{y,t,z} \left\{\left|
a_{yt}q_{yt}^z - \hat{a}_{yt}\hat{q}_{yt}^z
\right| < \frac{\sum_y a_{yt}q_{yt}^z}{10k}\epsilon \right\} \right).
\end{align*}
Lemma~\ref{thm3} suggests that 
$$P(|a_{yt}q_{yt}^z - \hat{a}_{yt}\hat{q}_{yt}^z|\geq t)\leq 2\exp\left(-\frac{2t^2}{\frac{1}{x_{yt}m}+\frac{(q_{yt}^z)^2}{n}}\right).$$
Thus, applying a union bound and Lemma~\ref{thm3}, we have
\begin{align*}
P\left(|\widehat{\ATE}-\ATE|\geq \epsilon \right)
&\leq 
\sum _{y,t,z}
P\left( \left|
a_{yt}q_{yt}^z - \hat{a}_{yt}\hat{q}_{yt}^z
\right| < \frac{\sum_y a_{yt}q_{yt}^z}{10k}\epsilon  \right)
\\&  
\leq 8k\max_{y, t,z} 
\exp\left(
-2\frac{\left(\sum_y a_{yt}q_{yt}^z\right)^2\epsilon^2}{(\frac{1}{x_{yt}m} + 
\frac{(q_{yt}^z)^2}{n})100k^2}
\right) \\&\leq \delta.
\end{align*}

Simplifying the equations above, we have
$$
\min_{y, t,z} 
\frac{\left(\sum_y a_{yt}q_{yt}^z\right)^2}{(\frac{1}{x_{yt}m} + 
\frac{(q_{yt}^z)^2}{n})}
\geq \frac{50k^2\ln\left(\frac{8k}{\delta}\right)}{\epsilon^2}.
$$

\end{proof}

\section{Corresponding Stories}\label{stories}
In this section, we will provide an example for each selection method such that this particular sampling performs the worst when compared with the other two methods. For the purpose of illustration, we consider binary confounder throughout this section. To ease notation, let $q_{yt}$ denote $q_{yt}^1$.

\paragraph{A Scenario in Which NSP Performs the Worst}
A drug repositioning start-up discovered that drug $T$ can potentially cure a disease $\gamma$. which has no known drug cure and goes away without treatments once a while. 
Since drug $T$ is commonly used to treat another disease $\eta$, the majority patients who has disease $\gamma$ do not receive any treatment. 
Among the ones who received drug $T$, the start-up discovered that the health outcomes of the majority of  patients have improved. 
The start-up proposes to bring drug $T$ to an observational study to verify whether drug $T$ could treat disease $\gamma$ while not controlling for patient's treatment adherence levels.
As in most cases, patient's treatment adherence levels could influence doctors' decision of whether to prescribe drug $T$ and whether the treatment for disease $\gamma$ will be successful.
Translating this scenario into our notations, we have $a_{01} = \epsilon_1$, $a_{10}=\epsilon_2$, $a_{11}=\epsilon_3$, and $a_{00} = 1-\sum_{i=1}^3\epsilon_i$, say $\mathbf{a}=(0.9,0.02,0.01,0.07)$. Now, imagine in the clinical trial, the patients are given a drug case containing drug $T$ such that the drug case automatically records the frequency that the patient takes the drug. Somehow we know a priori that the patients who do not have health improvement have on average poor treatment adherence, e.g., $q_{00}=0.9, q_{01}=0.7$; furthermore, those who have health improvement on average have good treatment adherence, e.g., $q_{10}=0.01, q_{11}=0.3$. Deconfounding according to NSP, i.e., $\mathbf{x}=(a_{00}, a_{01}, a_{10}, a_{11})$, in this case, will select most samples from the group $(Y=0,T=0)$. 
Since the ATE 
depends on the estimation that relies on both $T=0$, and $T=1$,
one would expect that NSP and OWSP will outperform NSP. 
The left column in Figure~\ref{examples} confirms this hypothesis.

\paragraph{A Scenario in Which USP Performs the Worst}

A group biostatisticians discovered that mutations on gene $T$ is likely to cause cancer $Y$ in patients with a particular type of heart disease. In particular, they discovered that among the those heart disease patients, 79\% of patients have neither mutation on $T$ nor cancer $Y$; 18\% patients have both mutation on $T$ and cancer $Y$. In other words, $a_{00}=0.79, a_{11}=0.18.$ Furthermore, we have $a_{01}=0.01,a_{10}=0.02$. This group of biostatisticians want to run a small experiment to confirm whether gene $T$ causes cancer $Y$. In particular, they are interested in knowing whether those patients also have mutations on gene $Z$, which is also suspected by the same group of biostatisticians to cause cancer $Y$. Somehow, we know a priori that $q_{00}=0.5, q_{01}=0.01, q_{10}=0.05, q_{11}=0.5$. From the calculation of the ATE, it is not difficult to observe that the error on the ATE is dominated by the estimation errors on $q_{00}, q_{11}$. Thus, we should sample more from the groups $(Y=0,T=0)$ and $(Y=1,T=1)$. 

\paragraph{A Scenario in Which OWSP Performs the Worst}

A team wants to reposition drug $T$ to cure diabetes. Drug $T$ has been used to treat a common comorbid condition of diabetes that appears in 31\% of the diabetic patient population. Among those patients who receive drug $T$, about 97\% has improved health, that is $a_{01}=0.01$ and $a_{11}=0.3$. Among the patients who have never received drug $T$, about 70\% have no health improvement, that is $a_{00}=0.5$, and $a_{10}=0.19$. Let $q_{00}=0.05, q_{01}=0.5, q_{10}=0.055, \text{ and } q_{11}=0.4$. In the ATE, it is easy to observe that $\frac{a_{11}q_{11}}{a_{11}q_{11}+a_{01}q_{01}}$ and $\frac{a_{11}(1-q_{11})}{a_{11}(1-q_{11})+a_{01}(1-q_{01})}$ are both dominated by 1 regardless of the estimates of $q_{11}$ and $q_{01}$. In this case, USP outperforms OWSP and NSP when the sample size is larger than $200$. On the other hand, the bottom figure in the third column of Figure~\ref{examples} shows that, when averaged over all possible values of $\mathbf{q}$, OWSP performs the best.

\section{Approximate Sampling Policies Under Finite Confounded Data}\label{procedure}
To deconfound according to NSP with finite confounded data is to deconfound the first $m$ confounded data.
For USP, we split the samples to the 4 groups as evenly as possible. That is,
we max out the bottleneck group/groups 
and distribute the excess data as evenly as possible among the remaining groups. 

For OWSP, we have $x_{yt} = \frac{ \hat{a}_{yt}}{\sum_y \hat{a}_{yt}}$,
and when implementing OWSP, we will first ensure
that the deconfounded samples are split 
as evenly as possible across treatment groups, 
and then within the each group, we split the samples close as possible to the outcome ratio.
\end{document}